\documentclass{article}


\usepackage{amsmath,amsfonts,amssymb}
\usepackage{algpseudocode}
\usepackage[ruled,vlined]{algorithm2e}
\usepackage{amsthm}
\usepackage{array}
\usepackage[caption=false,font=scriptsize,labelfont=rm,textfont=rm]{subfig}
\usepackage{textcomp}
\usepackage{stfloats}
\usepackage{url}
\usepackage{verbatim}
\usepackage{graphicx}
\usepackage{cite}
\usepackage{bm}
\usepackage{epstopdf}
\usepackage{multirow}
\usepackage{float}
\usepackage{color}
\usepackage{bbm}
\usepackage{adjustbox}
\usepackage{booktabs}
\usepackage{threeparttable}
\allowdisplaybreaks
\usepackage{multirow}
\usepackage{tcolorbox}
\usepackage{tikz}









\def\eqref#1{equation~\ref{#1}}









\def\1{\bm{1}}







\def\vzero{{\bm{0}}}



\DeclareMathAlphabet{\mathsfit}{\encodingdefault}{\sfdefault}{m}{sl}
\SetMathAlphabet{\mathsfit}{bold}{\encodingdefault}{\sfdefault}{bx}{n}













\DeclareMathOperator*{\argmin}{arg\,min}

\DeclareMathOperator{\Tr}{Tr}

\captionsetup[subfloat]{labelformat=simple}



\newcommand{\cD}{{\mathcal{D}}}

\newcommand{\cL}{{\mathcal{L}}}

\newcommand{\bB}{{\boldsymbol{B}}}

\newcommand{\bP}{{\boldsymbol{P}}}

\newcommand{\bW}{{\boldsymbol{W}}}

\newcommand{\EE}{\mathbb{E}}


\newtheorem{theorem}{Theorem}[section]

\newtheorem{lemma}[theorem]{Lemma}

\newtheorem{definition}[theorem]{Definition}
\newtheorem{assumption}[theorem]{Assumption}
\newtheorem{remark}[theorem]{Remark}

\usepackage{arxiv}
\usepackage[utf8]{inputenc} 
\usepackage[T1]{fontenc}    
\usepackage{hyperref}       
\usepackage{url}            
\usepackage{booktabs}       
\usepackage{amsfonts}       
\usepackage{nicefrac}       
\usepackage{microtype}      
\usepackage{lipsum}		
\usepackage{graphicx}
\usepackage{natbib}
\usepackage{doi}

\usepackage{authblk} 

\title{A Memory Efficient Randomized Subspace Optimization Method for Training Large Language Models}

\date{} 					

\author[1,*]{Yiming Chen}
\author[2,*]{Yuan Zhang}
\author[1]{Yin Liu}
\author[3,\dag]{Kun Yuan}
\author[1]{Zaiwen Wen}

\affil[1]{Beijing International Center for Mathematical Research, Peking University, Beijing, China}
\affil[2]{Center for Data Science, Peking University, Beijing, China}
\affil[3]{Center for Machine Learning Research, Peking University, Beijing, China}

\renewcommand\footnotemark{}  





\begin{document}
\maketitle

\footnotetext[1]{Equal contribution.}
\footnotetext[2]{Corresponding author (email: kunyuan@pku.edu.cn).}

\begin{abstract}
The memory challenges associated with training Large Language Models (LLMs) have become a critical concern, particularly when using the Adam optimizer. To address this issue, numerous memory-efficient techniques have been proposed, with GaLore standing out as a notable example designed to reduce the memory footprint of optimizer states. However, these approaches do not alleviate the memory burden imposed by activations, rendering them unsuitable for scenarios involving long context sequences or large mini-batches. Moreover, their convergence properties are still not well-understood in the literature. In this work, we introduce a Randomized Subspace Optimization framework for pre-training and fine-tuning LLMs. Our approach decomposes the high-dimensional training problem into a series of lower-dimensional subproblems. At each iteration, a random subspace is selected, and the parameters within that subspace are optimized. This structured reduction in dimensionality allows our method to simultaneously reduce memory usage for both activations and optimizer states. We establish comprehensive convergence guarantees and derive rates for various scenarios, accommodating different optimization strategies to solve the subproblems. Extensive experiments validate the superior memory and communication efficiency of our method, achieving performance comparable to GaLore and Adam.
\end{abstract}


\section{Introduction}\label{sec-introduction}

Large Language Models (LLMs) have achieved remarkable success across various domains \citep{brown2020language, achiam2023gpt, dubey2024llama}, primarily driven by the increasing scale of datasets and model parameters. The Adam optimizer \citep{kingma2014adam, loshchilov2017decoupled} is widely recognized as the default choice for training these models, owing to its operation efficiency and robust performance. 

However, as the scale of LLMs continues to grow, the associated memory demands have emerged as a significant bottleneck. This challenge stems from the need to store optimizer states, such as first-order and second-order moments, alongside the activations required for gradient computations. For instance, training a LLaMA-7B model necessitates 28GB of memory to store optimizer states in FP16 precision \citep{zhao2024galore}, while a GPT-3 model with 175B parameters requires an extraordinary 1.4TB memory in FP32 precision. Additionally, in scenarios involving long sequence lengths or large mini-batches, activation memory dominates as the primary constraint \citep{zhang2024revisiting}. These substantial memory requirements necessitate either deploying additional GPUs or reducing batch sizes. However, increasing the number of GPUs introduces additional communication overhead, potentially limiting training scalability \citep{malladi2023fine}, while smaller batch sizes prolong training time due to reduced throughput.

\textbf{Memory-efficient training algorithms.} Significant efforts have been made to address the memory overhead in LLMs training. One line of research  focuses on parameter-efficient methods, such as Low-Rank Adaptation (LoRA) and its variants \citep{hu2021lora,lialin2023relora,xia2024chain}, which constrain trainable parameters to low-rank subspaces for each weight matrix. Similarly, sparsity-based techniques \citep{thangarasa2023spdf} reduce memory usage by training only a subset of weights. These strategies decrease the number of trainable parameters, thereby reducing the memory requirements for storing gradients and optimizer states. Another research direction aims to achieve memory savings through the compression of optimizer states. For instance, GaLore and its variants \citep{zhao2024galore,he2024subspace,hao2024flora,chen2024enhancing} project gradients onto low-rank subspaces, leveraging the compressed gradients to compute the first- and second-order moments, which significantly reduces their memory footprint. {Alternatively, Adam-mini \citep{zhang2024adam} uses block-wise second-order moments for learning rate adjustments to reduce memory redundancy.} A recent study, Apollo \citep{zhu2024apollo}, reinterprets Adam as an adaptive learning rate algorithm applied to the gradient. Instead of the coordinate-wise approach used in Adam, it employs a column-wise adaptive learning rate, thereby effectively reducing the memory overhead associated with optimizer states.

\textbf{Limitations in existing approaches.} Despite the progress in  memory-efficient algorithms for training LLMs, two critical limitations persist in the aforementioned approaches:
\begin{itemize}
    \item[\textbf{L1.}] \textbf{Inability to reduce activations.} While the aforementioned approaches effectively reduce memory associated with optimizer states, they fail to address the memory burden posed by activations. This limitation stems from their reliance on computing full-rank gradients, which necessitates storing the complete activations. As a result, these methods are unsuitable for scenarios involving long context sequences or large mini-batches. 

    \item[\textbf{L2.}]  \textbf{Insufficient convergence guarantees.} While the aforementioned approaches demonstrate strong empirical performance, their theoretical convergence properties remain less understood. For instance, GaLore \citep{zhao2024galore} provides convergence analysis only for fixed projection matrices, rather than for the periodically updated projection matrices used in practical implementations. This lack of comprehensive theoretical guarantees raises concerns about whether these methods reliably converge to the desired solution and the rates at which such convergence occurs.


\end{itemize}

\textbf{Main results and contributions.}
In this work, we propose a method that concurrently reduces memory consumption for both the optimizer states and activations. The central idea behind our approach is to decompose the original high-dimensional training problem into a series of lower-dimensional subproblems. Specifically, at each iteration, we randomly select a subspace and optimize the parameters within this subspace. After completing the optimization in one subspace, we switch to a different subspace and continue the process. Since each subproblem operates in a lower-dimensional space, it requires smaller gradients and optimizer states. As we will demonstrate, the reduced dimensionality of the subproblems also leads to a significant reduction in the memory required for storing activations. Furthermore, the smaller scale of the subproblems results in reduced communication overhead when training across multiple workers. Our main contributions are as follows:
\begin{itemize}
    \item[\textbf{C1.}] \textbf{Subspace method for LLM training.} We introduce a \underline{\textbf{R}}andomized \underline{\textbf{S}}ubspace \underline{\textbf{O}}ptimization (\textbf{RSO}) framework for LLM training, which decomposes the original training problem into a series of lower-dimensional subproblems. This decomposition simultaneously reduces the memory required for optimizer states and activations, effectively addressing \textbf{Limitation L1}. Furthermore, the framework can reduce communication overhead in distributed training scenarios.
    
    \item[\textbf{C2.}] \textbf{Theoretical convergence guarantees.} We provide a comprehensive convergence analysis for the RSO framework. The established guarantees and rates apply across various scenarios. These include subproblems solved using zeroth-order, first-order, or second-order algorithms, as well as optimization methods like gradient descent, momentum gradient descent, adaptive gradient descent, and their stochastic variants. This addresses \textbf{Limitation L2}. Notably, we present refined convergence guarantees for scenarios where subproblems are solved using the Adam optimizer.
    
    \item[\textbf{C3.}] \textbf{Improved experimental performances.} We conduct extensive experiments to evaluate the proposed RSO framework. The experimental results demonstrate that our approach significantly enhances memory efficiency compared to state-of-the-art methods, such as GaLore and LoRA. Additionally, our method achieves faster training speeds by reducing communication overhead, outperforming both GaLore and Adam while maintaining comparable performance levels. These findings highlight the practical values of our approach.

\end{itemize}

\section{Related Works}

\textbf{Parameter-efficient methods.} 
A promising approach to memory-efficient training involves parameter-efficient methods, which reduce the number of trainable parameters and consequently lower the memory required for storing optimizer states. For example, \cite{hu2021lora} propose Low-Rank Adaptation (LoRA), which restricts trainable parameters to a low-rank subspace for each weight matrix. Similarly, \cite{thangarasa2023spdf} incorporate sparsity by training only a subset of weights. While these methods effectively reduce memory consumption, the reduction in trainable parameters can sometimes lead to suboptimal model performance \citep{biderman2024lora}. To address this limitation, recent advancements suggest using multiple LoRA updates to enable high-rank weight updates \citep{lialin2023relora, xia2024chain}. However, in pre-training settings, this approach still relies on a full-rank weight training phase as a warm-up before transitioning to low-rank training \citep{lialin2023relora}, thereby limiting its memory efficiency.

\textbf{{Optimizer-efficient methods.}} An alternative approach to memory savings focuses on compressing optimizer states while maintaining the number of trainable parameters. GaLore \citep{zhao2024galore} achieves this by compressing the gradient matrix through a projection onto a subspace and leveraging the compressed gradient to compute first- and second-order moments. This projection reduces the gradient size and is typically derived via the Singular Value Decomposition (SVD) of the true gradient \citep{zhao2024galore}. To mitigate the computational cost of SVD, alternative methods have been proposed, such as using random matrices \citep{hao2024flora,he2024subspace} or generating the projection matrix through online Principal Component Analysis (PCA) \citep{liang2024memory}. {Fira \citep{chen2024fira} and LDAdam \citep{robert2024ldadam} employ an error-feedback mechanism. The former combines the true gradient with the GaLore update to improve performance, while the latter explicitly accounts for both gradient and optimizer state compression.} Apollo \citep{zhu2024apollo} interprets Adam as an adaptive learning rate algorithm and uses compressed optimizer states directly as scaling factors for the true gradient. Additionally, Adafactor \citep{shazeer2018adafactor} discards the first-order moment and approximates the second-order moment with two low-rank matrices, while Adam-mini \citep{zhang2024adam} proposes that block-wise second-order moments are sufficient for adjusting learning rates. \citep{das2024natural} integrates the GaLore method with a natural gradient optimizer to enhance performance. Meanwhile, \citet{wen2025breaking} applies wavelet transforms to compress gradients beyond the low-rank structures.

\textbf{Activation-efficient methods.} 
Although the aforementioned methods effectively reduce memory consumption for optimizer states, they do not address the memory costs associated with activations. To reduce activations, zeroth-order (ZO) algorithms have been introduced in LLM training \citep{malladi2023fine}. These methods can be further improved through variance reduction techniques \citep{gautam2024variance}, while \citet{zhao2024second} utilizes ZO approaches to approximate a natural gradient algorithm. Moreover, \citet{chen2024enhancing} proposes a novel ZO framework to enhance performance. Unlike first-order (FO) methods, ZO algorithms approximate gradients by finite differences in function values, eliminating the need for explicit gradient computation. This approach bypasses backpropagation and activation storage, significantly reducing memory demands. However, due to their slower convergence rates \citep{duchi2015optimal, nesterov2017random, berahas2022theoretical}, ZO methods are primarily suitable for fine-tuning applications. Similarly, FO methods can achieve activation savings by layer-wise training \citep{lai2024lisa}, but their use also predominantly targets fine-tuning phases.

\textbf{System-based methods.} 
Several system-level techniques have been proposed to improve memory efficiency. Activation checkpointing \citep{chen2016training} reduces memory usage by recomputing activations on demand rather than storing them throughout the entire iteration, though this comes at the cost of increased computational complexity. Quantization \citep{dettmers2023qlora} lowers memory consumption by using lower-bit data representations, but this may introduce a trade-off between memory efficiency and training precision. Additionally, methods such as those introduced by \citet{ren2021zero, zhang2023g10} reduce GPU memory usage by offloading data to non-GPU resources, which can lead to additional communication overhead.

\section{Preliminaries}
This section introduces the optimization framework for LLM pre-training and fine-tuning, followed by a review of several memory-efficient methods. 

\subsection{LLM Optimization}
When addressing the pre-training or fine-tuning of LLMs, the problem can be formulated as follows: 
\begin{equation}\label{prob-opt}
    \min_{\bW} f(\bW) := \mathbb{E}_{\xi} \left[F(\bW; \xi)\right],
\end{equation}  
where \(\bW = \{W_\ell\}_{\ell=1}^\cL\) represents the set of trainable parameters with a total dimension of \(d\). Here, \(W_\ell \in \mathbb{R}^{m_\ell \times n_\ell}\) denotes the weight matrix for the \(\ell\)-th layer, and \(\cL\) is the total number of layers. The function \(F(\bW; \xi)\) is the loss function, which depends on the random variable \(\xi\) representing individual data samples.

To address the optimization problem defined in (\ref{prob-opt}), commonly used approaches include  SGD \citep{bottou2010large}, Momentum SGD \citep{sutskever2013importance}, and Adam \citep{kingma2014adam}. The iterative update rule for Adam is as follows:
\begin{subequations}\label{eqa:adam}
\begin{align}
 \mathbf{M}^t & = \beta_1 \cdot \mathbf{M}^{t-1} + (1 - \beta_1) \cdot \nabla F(\bW^t; \xi^t), \label{eqa:adam-1}\\
 \mathbf{V}^t &= \beta_2 \cdot \mathbf{V}^{t-1} + (1 - \beta_2) \cdot (\nabla F(\bW^t; \xi^t))^2, \label{eqa:adam-2}\\
 \hat{\mathbf{M}}^t &= {\mathbf{M}^t}/{(1 - \beta_1^t)}, \quad \hat{\mathbf{V}}_t = {\mathbf{V}_t}/{(1 - \beta_2^t)},\\
 \bW^{t+1} &= \bW^t - \alpha \cdot {\hat{\mathbf{M}}^t}/{(\sqrt{\hat{\mathbf{V}}^t} + \epsilon)}.
\end{align}
\end{subequations}
Here, $\mathbf{M}$ and $\mathbf{V}$ represent the first-order and second-order moments, respectively, and $\epsilon > 0$ is a small constant.

\subsection{Memory Consumption in LLM Training}
The key memory components involved in the training process include four primary elements: model parameters, optimizer states, gradients, and activations. The model component stores parameters required for training. In the case of the Adam optimizer, the optimizer states are represented by the first and second moment estimates, denoted as \(\mathbf{M}\) and \(\mathbf{V}\). The gradient corresponds to the memory cost associated with \(\nabla F(\bW; \xi)\). With the Adam optimizer, both the optimizer state and the gradient are determined by the number of trainable parameters, see recursions (\ref{eqa:adam-1})--(\ref{eqa:adam-2}). 

Another significant memory cost arises from the activations, which represent the intermediate values computed during forward propagation. Unlike the optimizer state and gradients, the memory  for activations depends on multiple factors, including model size, batch size, and sequence length.
\begin{figure}[!htbp]
    \centering
    \includegraphics[width=0.5\linewidth]{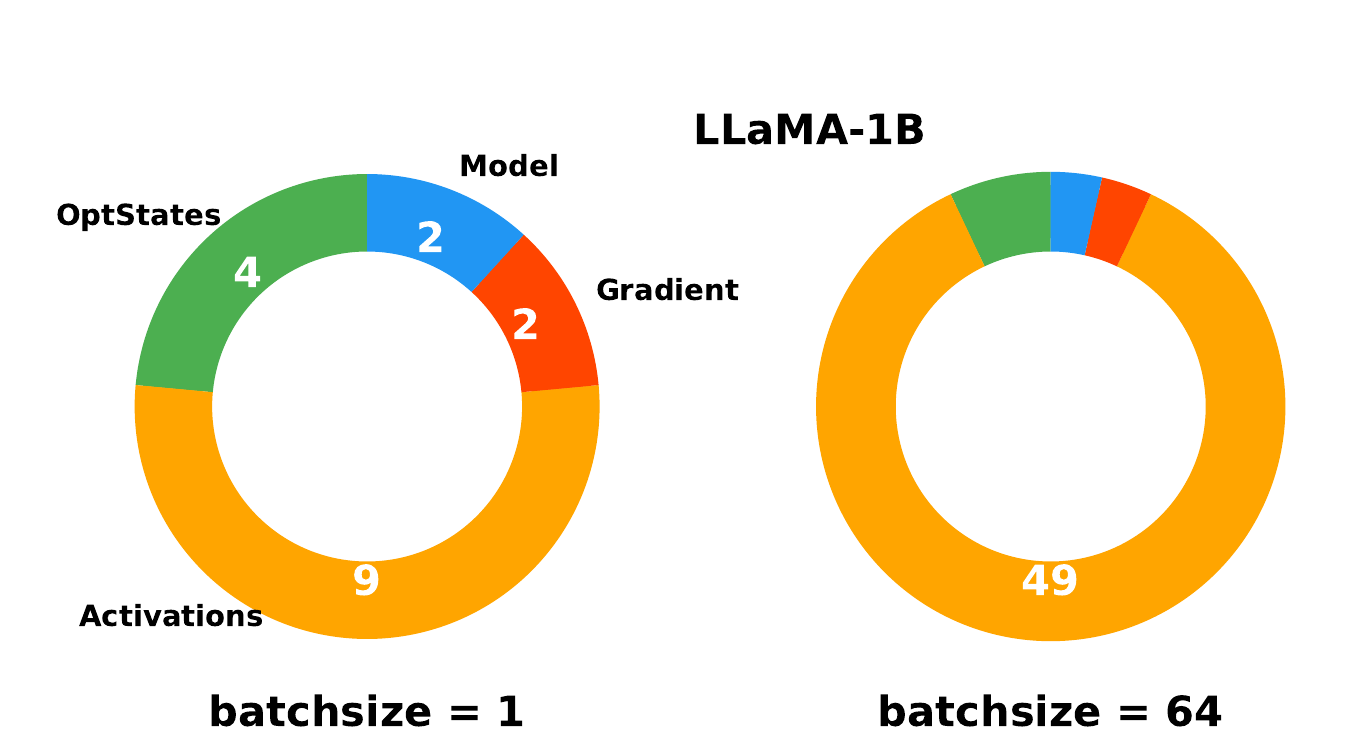}
    \caption{\small Memory components for training LLaMA-1B model with Adam optimizer.}
    \label{fig:memory-component-llama1b}
\end{figure}

Figure \ref{fig:memory-component-llama1b} illustrates the memory consumption during the training of the LLaMA-1B model. For small batch sizes, the optimizer state constitutes a substantial portion of the memory usage. In contrast, for large batch sizes, activations dominate and account for nearly the entire memory cost.

\subsection{Memory-efficient Method}
As previously discussed, the optimizer state imposes a substantial memory overhead. To address this challenge, GaLore \citep{zhang2023fine} introduces a projection technique that generates a compressed representation of the optimizer state, eliminating the need to store its full version. Consequently, the update rule of GaLore is as follows:
\begin{subequations}\label{eqa:galore}
\begin{align}
 \tilde{\mathbf{M}}^t &= \beta_1 \cdot \mathbf{\tilde M}^{t-1} \hspace{-0.5mm}+\hspace{-0.5mm} (1 \hspace{-0.5mm}-\hspace{-0.5mm} \beta_1) \hspace{-0.5mm}\cdot\hspace{-0.5mm} \bP^\top \nabla F(\bW^t; \xi^t), \label{eqa:galore-1} \\
 \tilde{\mathbf{V}}^t &= \beta_2 \cdot \mathbf{\tilde V}^{t-1} \hspace{-0.5mm}+\hspace{-0.5mm} (1 \hspace{-0.5mm}-\hspace{-0.5mm} \beta_2) \hspace{-0.5mm}\cdot\hspace{-0.5mm}(\bP^\top \nabla F(\bW^t; \xi^t))^2, \label{eqa:galore-2} \\
 \hat{\mathbf{M}}_t &= {\mathbf{\tilde M}_t}/{(1 - \beta_1^t)}, \quad \hat{\mathbf{V}}_t = {\tilde{\mathbf{V}}_t}/{(1 - \beta_2^t)},\\
 \bW^{t+1} &= \bW^t - \alpha \cdot \bP \ {\hat{\mathbf{M}}_t}/{(\sqrt{\hat{\mathbf{V}}}_t + \epsilon)}.
\end{align}
\end{subequations}
Here, \( \bP \) represents the projection matrix, which maps the gradient matrix onto a lower-dimensional subspace. Specifically, GaLore selects \( \bP \) as the top left singular vectors of the gradient matrix, capturing its most important components.

Since the projected gradient \( \bP^\top \nabla F(\bW^t; \xi^t) \) lies within a low-rank subspace, the associated optimizer states \( \tilde{\mathbf{M}} \) and \( \tilde{\mathbf{V}} \) in GaLore are also substantially reduced in size. This leads to notable memory savings compared to the Adam optimizer. However, as shown in Figure \ref{fig:memory-component-llama1b}, the optimizer state contributes significantly to memory costs primarily when using a small batch size. Conversely, with larger batch sizes—more practical in many scenarios—the memory efficiency advantages of GaLore diminish.

\section{Randomized Subspace Optimization}
In this section, we present the randomized subspace optimization (RSO) method, tailored explicitly for the pre-training and fine-tuning of LLMs.

\subsection{Algorithm Framework}  
As previously discussed, the memory overhead in LLM training primarily stems from the large scale of the models. In other words, the primary source of memory consumption arises from the high dimensionality of the LLM training problem (\ref{prob-opt}). This observation motivates us to decompose the original problem into a series of lower-dimensional subproblems. By partitioning the problem into smaller components, we can effectively reduce memory usage, as each subproblem requires less memory to process.

Similar to the random coordinate descent \citep{wright2015coordinate}, which optimizes the objective function one coordinate at a time, we address problem (\ref{prob-opt}) incrementally, subspace by subspace. The proposed update rules are as follows:
\begin{subequations}\label{eqa-SO}
\begin{align}
    & \tilde{\bB}^k \approx \argmin_{\bB} \left\{ f(\bW^{k} + \bP^{k} \bB) + \frac{1}{2\eta^{k}} \|\bB\|^2 \right\}, \label{prob-subopt-prox} \\
    & \bW^{k+1} = \bW^{k} + \bP^{k} \tilde{\bB}^k, \label{eqa-outer-step}
\end{align}
\end{subequations}
Here, \( f(\bW^{k} + \bP^{k} \bB) = \mathbb{E}_\xi[F(\bW^{k} + \bP^{k} \bB; \xi)] \) in which \( \bP^{k} = \{P^{k}_{\ell}\}_{\ell=1}^\cL \) denotes the subspace projection matrices. Each \( P^{k}_\ell \in \mathbb{R}^{m_\ell \times r_\ell} \) is a randomly selected matrix with \( r_\ell \ll m_\ell \). The parameters \( \bB = \{B_\ell\}_{\ell=1}^\cL \) consist of variables with significantly smaller dimensions compared to \( \bW \). Specifically, in the \( \ell \)-th layer, \( B_\ell \) has dimensions \( r_\ell \times n_\ell \), whereas \( W_\ell \) has dimensions \( m_\ell \times n_\ell \). A proximal term $\|\bB\|^2:=\sum_\ell\|B_\ell\|_F^2$ is introduced to (\ref{prob-subopt-prox}) to ensure convergence, with coefficient \( \eta^{k} \)  to regulate its influence.

In the \( k \)-th iteration, a subspace projection matrix \( \bP^k \) is randomly selected, and the subproblem in (\ref{prob-subopt-prox}) is solved. {This process approximately minimizes the objective function within the chosen subspace. Upon solving the subproblem, the current parameters are updated, and a new subspace projection matrix, \( \bP^{k+1} \), is selected for the subsequent iteration.} When addressing the subproblem in (\ref{prob-subopt-prox}), standard optimizers such as GD, SGD, momentum SGD or Adam can be employed. Notably, obtaining an exact solution in (\ref{prob-subopt-prox}) is not required; an inexact solution suffices for the proposed approach. The RSO algorithm is presented in Algorithm \ref{alg-rso}.

\begin{algorithm}[!htbp]
\caption{Randomized Subspace Optimization}\label{alg-rso}
\KwIn{Initialization $\bW^0$}
\KwOut{Solution $\bW^K$}
\For{$k = 0, 1, \dots, K-1$}{
    Sample $\bP^k$ according to a given distribution\;
    Solve subproblem (\ref{prob-subopt-prox}) and obtain the approximate solution $\tilde{\bB}^k$ using a given optimizer such as Adam\;
    Update the weights by $\bW^{k+1} = \bW^{k} + \bP^{k} \tilde{\bB}^k$\;
}
\end{algorithm}

\subsection{Memory Efficiency}

We now demonstrate that the proposed RSO approach offers superior memory efficiency. Unlike other memory-efficient methods (e.g., GaLore, Adam-mini, Apollo, etc.) that primarily focus on reducing the memory usage of optimizer states, the RSO method additionally achieves substantial savings in gradient and activation memory requirements.

\textbf{Memory for optimizer states.} When solving (\ref{prob-subopt-prox}), the reduced dimensionality of the subproblem significantly decreases the memory requirements for optimizer states. For instance, the memory required for both the first-order and second-order moment estimates in each subproblem is \( r_\ell n_\ell \) parameters per \(\ell\)-th layer, which is substantially lower than the \( m_\ell n_\ell \) memory overhead in the standard Adam optimizer.

\textbf{Memory for gradients.} Specifically, for the subproblems in (\ref{prob-subopt-prox}), it is sufficient to compute the gradient with respect to \( \bB \), i.e., \( \nabla_\bB F(\bW^{k} + \bP^{k} \bB; \xi) \), rather than calculating the full-dimensional gradient {\(\nabla_\bW F(\bW^k; \xi)\)} with respect to the original weight matrix \( \bW \), as outlined in the GaLore recursion in (\ref{eqa:galore-1})--(\ref{eqa:galore-2}). This results in considerable memory savings associated with the gradient computation.

\textbf{Memory for activations.} {The RSO method not only reduces memory usage for gradients but also significantly minimizes the memory required to store activations. For example, consider a neural network where the \( \ell \)-th layer is defined as follows:
\begin{align}
\mbox{(Adam)}:& \quad Z_\ell  = Y_\ell \cdot W_\ell , \hspace{14.5mm} \quad y = L(Z_\ell). \label{eqa-one-layer-Adam} \\
\mbox{(RSO)}:& \quad Z_\ell  = Y_\ell \cdot (W_\ell + P_\ell B_\ell), \quad y = L(Z_\ell). \label{eqa-one-layer-rso}
\end{align}  
Expression (\ref{eqa-one-layer-Adam}) represents the forward process of Adam, where the \( \ell \)-th layer is associated with the weight matrix \( W_\ell \), while (\ref{eqa-one-layer-rso}) corresponds to the RSO method associated with the weight matrix \( B_\ell \). Here, \( Y_\ell \in \mathbb{R}^{s_\ell \times m_\ell} \) denotes the output of the previous layer (i.e., the activation), and \( Z_\ell \) serves as the input to the next layer. The function \( L(\cdot) \), encompassing all subsequent layers and the loss function, depends only on \( Z_\ell \) and not on \( Y_\ell \). Thus, once \( Z_\ell \) is computed, \( Y_\ell \) is no longer required for calculating the loss \( y \).

In the backward-propagation process, Adam and RSO computes the weight gradient as follows: 
\begin{align}
\mbox{(Adam)}:& \quad \frac{\partial y}{\partial W_\ell} = Y_\ell^\top \frac{\partial y}{\partial Z_\ell},\label{eqa-one-layer-Adam-grad} \vspace{1mm} \\
\mbox{(RSO)}:& \quad \ \frac{\partial y}{\partial B_\ell} = (Y_\ell P_\ell)^\top \frac{\partial y}{\partial Z_\ell}. \label{eqa-one-layer-rso-grad}
\end{align}
Adam requires storing the activation \( Y_\ell \in \mathbb{R}^{s_\ell \times m_\ell} \) in (\ref{eqa-one-layer-Adam-grad}) to compute gradients with respect to \( W_\ell \). In contrast, RSO only needs to store \( Y_\ell P_\ell \in \mathbb{R}^{s_\ell \times r_\ell} \) to compute gradients with respect to \( B_\ell \). Since \( r_\ell \ll m_\ell \), this approach achieves significant memory savings. As a result, the RSO method substantially reduces the memory overhead associated with activations in layers of the form (\ref{eqa-one-layer-rso}).

We analyze the memory overhead of the proposed RSO method for a typical transformer block. Table \ref{tab:memory-analysis} summarizes the results, comparing memory usage for optimizer states and activations across various algorithms. Details  of this memory analysis is presented in Appendix \ref{app-memory}.

\begin{table}[!htbp]
\centering
\begin{tabular}{lcc}
    \toprule
    \multirow{2}{*}{Algorithm} & \multicolumn{2}{c}{Memory} \\  
    \cmidrule(lr){2-3} 
               & Optimizer States & Activations \\
    \midrule
   \textbf{RSO}        & $24nr$ & $8bsn+4bsr+2bs^2$ \\ 
    GaLore     & $24nr$ & $15bsn+2bs^2$ \\  
    LoRA       & $48nr$ & $15bsn+2bs^2$ \\ 
    Adam       & $24n^2$ & $15bsn+2bs^2$ \\ 
    \bottomrule
\end{tabular}
\vspace{0.2cm}  
\caption{\small Memory analysis of different algorithms in terms of optimizer states and activations for one typical transformer block. Here, $s$, $b$, and $n$ represent the sequence length, batch size, and embedding dimension, respectively. The intermediate dimension of the feed-forward network is assumed to be $4n$.}
\label{tab:memory-analysis}
\end{table}

\subsection{Communication Efficiency}
As previously mentioned, the RSO algorithm solves smaller subproblems at each iteration, resulting in gradients with reduced dimensionality compared to methods like Adam and GaLore, which rely on full-dimensional gradients. This reduction in gradient size enables RSO to achieve improved communication efficiency.

Specifically, in a data-parallel framework such as Distributed Data Parallel (DDP), the model is replicated across multiple devices, with each device computing gradients on its local data batch. These gradients are then aggregated across devices, necessitating gradient communication. By operating with lower-dimensional gradients, the RSO method effectively reduces communication overhead compared to existing approaches.

\begin{table*}[!htbp]
    \renewcommand{\arraystretch}{1.2} 
    \centering
    \small 
    \begin{tabular*}{\textwidth}{@{\extracolsep{\fill}} >{\centering\arraybackslash}p{6.5cm} >{\centering\arraybackslash}p{4.8cm} >{\centering\arraybackslash}p{4.8cm} }
    \toprule\toprule
        \textbf{Subproblem Solver} & \textbf{Subproblem Complexity} & \textbf{Total Complexity} \\
    \midrule
        \multicolumn{3}{c}{\textbf{Zero-Order (ZO) Methods}} \\
    \midrule
        Stochastic ZO Method \cite{shamir2013complexity} 
        & \(O\big((\sum_{\ell=1}^\cL n_\ell r_\ell)^2 \epsilon^{-2}\big)\)  
        & \(O((\sum_{\ell=1}^\cL n_\ell r_\ell)^2 \epsilon^{-3}) \) \\
    \midrule
        \multicolumn{3}{c}{\textbf{First-Order (FO) Methods}} \\
    \midrule
        GD & \(O(\log \epsilon^{-1})\) & \(\tilde{O}(\epsilon^{-1})\)\\
        Accelerated GD \cite{nesterov2018lectures} & \(O(\log \epsilon^{-1})\) & \(\tilde{O}(\epsilon^{-1})\)   \\
        SGD \cite{bottou2018optimization} & \(O(\epsilon^{-1})\)  & \(O(\epsilon^{-2})\) \\
        Momentum SGD \cite{yuan2016influence} & \(O(\epsilon^{-1})\)  & \(O(\epsilon^{-2})\) \\
        Adam-family \cite{guo2024unifiedconvergenceanalysisadaptive} & \(O(\epsilon^{-1})\)  & \(O(\epsilon^{-2})\) \\
    \midrule
        \multicolumn{3}{c}{\textbf{Second-Order (SO) Methods}} \\
    \midrule
        Newton's method \cite{boyd2004convex} & \(O(\log (\log \epsilon^{-1}))\)  & \(\tilde{O}(\epsilon^{-1})\)  \\
        Stochastic Quasi-Newton method \cite{byrd2016stochastic} & \(O(\epsilon^{-1})\) & \(O(\epsilon^{-2})\) \\
    \bottomrule\bottomrule
    \end{tabular*}
    \caption{\small The sample complexities of the RSO method with various subproblem solvers. For the ZO solver, it refers to the number of {stochastic} function value evaluations; for the FO solver, it refers to the number of {deterministic/stochastic} gradient computations; and for the SO solver, it refers to the number of {deterministic/stochastic} Hessian or estimated Hessian computations.  $\tilde{O}(\cdot)$ hides logarithm terms.}
    \label{tab:convergence-rate}
\end{table*}

\section{Convergence Analysis}
In this section, we present the convergence guarantees for the RSO method. To account for the use of various optimizers in solving the subproblem (\ref{prob-subopt-prox}), we assume that, at each iteration \( k \), the chosen optimizer produces an expected \( \epsilon \)-inexact solution. Such an expected \( \epsilon \)-inexact solution is defined below:

\begin{definition}[Expected \(\epsilon\)-inexact solution]  
A solution \(\tilde{\bB}^k\) is said to be an expected \(\epsilon\)-inexact solution if it satisfies:  
\begin{equation}
    \EE[g^k(\tilde{\bB}^k)] - g^k(\bB^k_\star) \leq \epsilon,
\end{equation}  
where \( g^k(\bB) := f(\bW^{k} + \bP^{k} \bB) + \frac{1}{2\eta^{k}} \|\bB\|^2 \), and $\bB^k_\star$ is the optimal solution define as \(\bB^k_\star := \argmin_{\bB} g^k(\bB) \).
\end{definition}
When $\eta^k$ is properly chosen, it can be guaranteed that \(g^k(\bB)\) is a strongly convex function hence \(\bB^k_\star\) is unique.

To establish convergence guarantees for the RSO algorithm, we require the following assumptions:

\begin{assumption}\label{ass-smooth}
The objective function \(f(\bW)\) is \(L\)-smooth, i.e., it holds for any $\bW^1$ and $\bW^2$ that
\begin{align*}
\|\nabla f(\bW^1) - \nabla f(\bW^2)\| \leq L \|\bW^1 - \bW^2\|,
\end{align*}
where $\|\bW\| := \sqrt{\sum_{\ell=1}^\cL \|W_\ell\|_F^2}\ $ for any $\bW = \{W_\ell\}_{\ell=1}^\cL$.
\end{assumption} 

\begin{assumption}\label{ass-orthogonal}
The random matrix \( \bP = \{P_\ell\}_{\ell=1}^\cL \) is sampled from a distribution such that \( P_\ell^\top P_\ell = (m_\ell/r_\ell) I_{r_\ell} \) and \( \mathbb{E}[P_\ell P_\ell^\top] = I_{m_\ell} \) for each \(\ell\). 
\end{assumption}

\begin{remark}\label{rem-orthogonal}
In practice, when \( m_\ell \gg r_\ell \), sampling each \( P_\ell \) from a normal distribution \(\mathcal{N}(0, \frac{1}{r_\ell})\) yields an approximation \( P_\ell^\top P_\ell \approx (m_\ell/r_\ell) I_{r_\ell} \). This approach provides computational efficiency. However, to rigorously satisfy Assumption \ref{ass-orthogonal}, \( P_\ell \) should be drawn from a Haar distribution or constructed as a random coordinate matrix (see \citep{kozak2023zeroth}, Examples 1 and 2 for further details).
\end{remark}

The following theorem establish the convergence rate of the RSO algorithm. Detailed proofs are provided in Appendix \ref{app-proof}.

\begin{theorem}\label{thm-rso}
Under Assumptions \ref{ass-smooth} and \ref{ass-orthogonal}, let each subproblem in (\ref{eqa-outer-step}) be solved starting from the initial point \(\bB^0 = \vzero\) to an expected \(\epsilon\)-inexact solution $\tilde \bB^k$ with suitable choice of $\eta^k$. The sequence \(\{ \bW^k \}\) generated by the RSO method satisfies the following bound:
\begin{equation}
\frac{1}{K}\sum_{k=0}^{K-1}\EE\| \nabla f(\bW^k)\|^2 \leq \frac{18\hat L\Delta_0}{K} + 18\hat L\epsilon,
\end{equation}
where \( \Delta_0 := f(\bW^0) - f^* \) and \( \hat L := \max_{\ell}\{m_\ell / r_\ell\}L \).
\end{theorem}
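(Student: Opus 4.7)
The plan is a descent-lemma argument on the proximal subproblem, combined with the isotropy of the random projection to recover the full-space gradient norm. Throughout, I will condition first on $\bW^k$ and $\bP^k$ (handling the subproblem solver's randomness), then average over $\bP^k$ using Assumption~\ref{ass-orthogonal}, and finally telescope.

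\textbf{Step 1: Smoothness of the projected objective.} Let $\phi^k(\bB) := f(\bW^{k} + \bP^{k}\bB)$. For any $\bD$, the chain rule and Assumption~\ref{ass-orthogonal} give $\|\bP^{k}\bD\|^2 = \sum_{\ell}\|P^k_\ell D_\ell\|_F^2 \leq \max_{\ell}(m_\ell/r_\ell)\,\|\bD\|^2$, since $\|P^k_\ell\|_{\mathrm{op}}^2 = \lambda_{\max}((P^k_\ell)^\top P^k_\ell) = m_\ell/r_\ell$. Combined with Assumption~\ref{ass-smooth}, this shows $\phi^k$ is $\hat L$-smooth, and hence the quadratic upper bound
\begin{equation*}
g^k(\bB) \;\leq\; f(\bW^k) + \langle (\bP^{k})^\top \nabla f(\bW^k),\bB\rangle + \tfrac{1}{2}\bigl(\hat L + 1/\eta^k\bigr)\|\bB\|^2
\end{equation*}
holds for every $\bB$, where I used $g^k(0)=f(\bW^k)$ and $\nabla\phi^k(0)=(\bP^k)^\top\nabla f(\bW^k)$.

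\textbf{Step 2: Bound on the optimal value $g^k(\bB^k_\star)$.} Minimizing the RHS above in $\bB$ (a one-dimensional quadratic along the direction $-(\bP^k)^\top\nabla f(\bW^k)$) gives
\begin{equation*}
g^k(\bB^k_\star) \;\leq\; f(\bW^k) - \tfrac{1}{2(\hat L + 1/\eta^k)}\,\|(\bP^{k})^\top\nabla f(\bW^k)\|^2.
\end{equation*}

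\textbf{Step 3: Inject the inexactness condition and drop the proximal term.} By the expected $\epsilon$-inexactness, $\EE[g^k(\tilde\bB^k)\mid \bW^k,\bP^k] \leq g^k(\bB^k_\star)+\epsilon$. Since $f(\bW^{k+1}) = g^k(\tilde\bB^k) - \frac{1}{2\eta^k}\|\tilde\bB^k\|^2 \leq g^k(\tilde\bB^k)$, combining with Step~2 yields
\begin{equation*}
\EE[f(\bW^{k+1})\mid \bW^k,\bP^k] \;\leq\; f(\bW^k) - \tfrac{1}{2(\hat L + 1/\eta^k)}\|(\bP^{k})^\top\nabla f(\bW^k)\|^2 + \epsilon.
\end{equation*}

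\textbf{Step 4: Average over $\bP^k$ and telescope.} Using Assumption~\ref{ass-orthogonal}'s second property and the cyclicity of trace on each layer,
\begin{equation*}
\EE_{\bP^k}\|(\bP^{k})^\top\nabla f(\bW^k)\|^2 = \sum_\ell \tr\bigl(\nabla_\ell f(\bW^k)^\top\,\EE[P^k_\ell (P^k_\ell)^\top]\,\nabla_\ell f(\bW^k)\bigr) = \|\nabla f(\bW^k)\|^2.
\end{equation*}
Taking full expectations, rearranging, and picking the suitable step $1/\eta^k = 8\hat L$ so that $2(\hat L + 1/\eta^k) = 18\hat L$, I sum over $k=0,\dots,K-1$, use $f(\bW^K)\geq f^\star$ to bound the telescoping sum by $\Delta_0$, and divide by $K$ to obtain the stated inequality.

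\textbf{Expected obstacle.} The individual steps are standard, so the main conceptual obstacle is Step~4: correctly unbiasing $\|(\bP^{k})^\top \nabla f(\bW^k)\|^2$ against the full gradient norm. This is exactly where Assumption~\ref{ass-orthogonal} (and the discussion in Remark~\ref{rem-orthogonal} that a Gaussian projection only \emph{approximately} satisfies it) is indispensable, and it is also what forces the choice $1/\eta^k > \hat L$—one needs enough curvature in the proximal term so that the quadratic upper bound in Step~1 still leads to a useful minimizer bound and so that $g^k$ is (strongly) convex enough for the inexactness notion to be meaningful. Setting $1/\eta^k = 8\hat L$ (any constant strictly greater than $\hat L$ works; this value produces the stated constant $18$) resolves both issues simultaneously.
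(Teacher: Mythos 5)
Your proposal is correct, and it reaches the stated bound by a cleaner route than the paper. The paper's proof first establishes that $g^k$ is $(\tfrac{1}{\eta^k}-\hat L)$-strongly convex and $(\tfrac{1}{\eta^k}+\hat L)$-smooth, then combines two inequalities: strong convexity at $\bB=\vzero$ gives $g^k(\bB^k_\star)\le g^k(\vzero)-\tfrac{\mu}{2}\|\bB^k_\star\|^2$, and smoothness converts $\|\bB^k_\star\|^2$ back into $(\hat L+1/\eta^k)^{-2}\|\nabla g^k(\vzero)\|^2$; with $\eta^k=\tfrac{1}{2\hat L}$ the product $\tfrac{\mu}{2}(\hat L+1/\eta^k)^{-2}$ equals $\tfrac{1}{18\hat L}$. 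You instead apply the descent lemma for the $\hat L$-smooth function $\phi^k$ and minimize the resulting quadratic upper bound, obtaining the per-iteration decrease $\tfrac{1}{2(\hat L+1/\eta^k)}\|(\bP^k)^\top\nabla f(\bW^k)\|^2$ in one step; the unbiasing computation via $\EE[P_\ell P_\ell^\top]=I_{m_\ell}$ and the telescoping are then identical to the paper's. Your route is more direct and yields a strictly sharper per-iteration constant: with the paper's choice $\eta^k=\tfrac{1}{2\hat L}$ you would get $6\hat L$ in place of $18\hat L$, and you only tune $1/\eta^k=8\hat L$ to reproduce the advertised constant. The one thing the paper's detour through strong convexity buys is exactly what you flag in your closing remark: it is still needed (for any $1/\eta^k>\hat L$) to make $\bB^k_\star$ well defined so that the expected $\epsilon$-inexactness condition is meaningful, but it plays no role in your decrease estimate itself.
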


\textbf{Sample complexity with different optimizers.} When all subproblems are solved to expected \(\epsilon\)-inexact solutions, the RSO method achieves an \(\epsilon\)-stationary point within \(O(\epsilon^{-1})\) iterations. As each iteration requires solving the subproblem (\ref{prob-subopt-prox}), the total sample complexity of the RSO method depends on the solver employed for this subproblem. For instance, since gradient descent solves (\ref{prob-subopt-prox}) in \(O(\log \epsilon^{-1})\) inner iterations, the RSO method using gradient descent attains a total sample complexity of \(O(\epsilon^{-1} \log \epsilon^{-1})\). Table \ref{tab:convergence-rate} summarizes the sample complexities for the RSO method when equipped with various solvers, including zeroth-order, first-order, and second-order scenarios with optimizers such as gradient descent, momentum gradient descent, adaptive gradient descent, and their stochastic variants.

\textbf{Comparable complexity with vanilla Adam.} It is observed in Table~\ref{tab:convergence-rate} that RSO with Adam to solve subproblem has sample complexity ${O}(\epsilon^{-2})$, which is on the same order as vanilla Adam \citep{kingma2014adam} without subspace projection.

\begin{table*}[!htbp]\label{tab:complexities}
\centering
\begin{tabular}{lcccc}
\toprule
\textbf{Algorithm} & \textbf{60M} & \textbf{130M} & \textbf{350M} & \textbf{1B} \\ 
\midrule
Adam*  & 34.06 (0.22G)   & 25.08 (0.50G)  & 18.80 (1.37G) & 15.56 (4.99G) \\ 
\midrule
GaLore*      & 34.88 (0.14G)   & 25.36 (0.27G)  & 18.95 (0.49G) & \textbf{15.64} (1.46G) \\ 
LoRA*       & 34.99 (0.16G)   & 33.92 (0.35G)  & 25.58 (0.69G) & 19.21 (2.27G)  \\ 
ReLoRA*      & 37.04 (0.16G)   & 29.37 (0.35G)  & 29.08 (0.69G) & 18.33 (2.27G) \\ 
\textbf{RSO}         & \textbf{34.55}(0.14G)  & \textbf{25.34} (0.27G) & \textbf{18.86} (0.49G)            & 15.86 (1.46G) \\ 
\midrule
\textbf{$r / d_{model}$} & 128 / 256  & 256 / 768  & 256 / 1024 & 512 / 2048 \\ 
Training Tokens (B) & 1.1  & 2.2 & 6.4  & 13.1 \\ 
\bottomrule
\end{tabular}
\caption{\small Comparison of validation perplexity and estimated memory usage for optimizer states across different algorithms during the pre-training of LLaMA models of various sizes on the C4 dataset. The optimizer states are stored in BF16 format. Results marked with * are sourced from \citep{zhao2024galore}.}
\label{tab:llama-comparison}
\end{table*}

\begin{table*}[!htbp]
    \centering
    \renewcommand{\arraystretch}{1} 
    \setlength{\tabcolsep}{6pt} 
    \begin{tabular}{l c c c c c c c c | c} 
    \toprule
    & \textbf{CoLA} & \textbf{STS-B} & \textbf{MRPC} & \textbf{RTE} & \textbf{SST2} & \textbf{MNLI} & \textbf{QNLI} & \textbf{QQP} & \textbf{Avg} \\
    \midrule
    Adam  & 62.24 & 90.92 & 91.30 & 79.42 & 94.57 & 87.18 & 92.33 & 92.28 & 86.28 \\
    \midrule
    GaLore (rank=4)  & 60.35 & \textbf{90.73} & \textbf{92.25} & \textbf{79.42} & 94.04 & \textbf{87.00} & 92.24 & 91.06 & 85.89 \\
    LoRA (rank=4)  & 61.38 & 90.57 & 91.07 & 78.70  & 92.89 & 86.82 & 92.18 & \textbf{91.29} & 85.61 \\
    \textbf{RSO} (rank=4)  & \textbf{62.47} & 90.62 & \textbf{92.25} & 78.70 & \textbf{94.84} & 86.67 & \textbf{92.29} & 90.94 & \textbf{86.10} \\
    \midrule
    GaLore (rank=8)  & 60.06 & \textbf{90.82} & 92.01 & \textbf{79.78} & 94.38 & \textbf{87.17} & 92.20 & 91.11 & 85.94 \\
    LoRA (rank=8) & 61.83 & 90.80 & 91.90 & 79.06  & 93.46 & 86.94 & 92.25 & 91.22 & 85.93 \\
    \textbf{RSO} (rank=8) & \textbf{64.62} & 90.71 & \textbf{93.56} & 79.42 & \textbf{95.18} & 86.96 & \textbf{92.44} & \textbf{91.26} & \textbf{86.77} \\
    \bottomrule
    \end{tabular}
    \caption{\small Evaluation of various fine-tuning methods on the GLUE benchmark using the pre-trained RoBERTa-Base model. The average score across all tasks is provided.}
    \label{tab:glue-fine_tuning}
\end{table*}

\section{Experiments}
\label{sec-exp}
In this section, we present numerical experiments to evaluate the effectiveness of our RSO method. We assess its performance on both pre-training and fine-tuning tasks across models of varying scales. In all tests, the RSO method uses the Adam optimizer to solve the subproblems. Additionally, we compare the memory usage and time cost of our RSO method with existing approaches to highlight its advantages in memory and communication efficiency.

\subsection{Pre-training with RSO}

\textbf{Experimental setup.} We evaluate the performance of our RSO method on LLaMA models with sizes ranging from 60M to 7B parameters. The experiments are conducted using the C4 dataset, a large-scale, cleaned version of Common Crawl's web corpus, which is primarily intended for pre-training language models and word representations \citep{raffel2020exploring}. We compare our method against LoRA \citep{hu2021lora}, GaLore \citep{zhao2024galore}, ReLoRA \citep{lialin2023relora}, and Adam \citep{kingma2014adam} as baseline methods. We adopt the same configurations as those reported in \citep{zhao2024galore}, and the detailed settings for pre-training are provided in Appendix \ref{app-setting-pretrain}.

\textbf{Main results.} As shown in Table \ref{tab:llama-comparison}, under the same rank constraints, RSO outperforms other memory-efficient methods in most cases. We also report the estimated memory overhead associated with optimizer states. From Table \ref{tab:llama-comparison}, we observe that for LLaMA-350M, RSO achieves nearly the same performance as Adam while reducing the memory required for optimizer states by 64.2$\%$. For LLaMA-1B, this reduction increases to 70.7$\%$. The results for LLaMA-7B are provided in Appendix \ref{app-pretrain-llama7b}.

\subsection{Fine-tuning with RSO}  
\textbf{Experimental setup.}  
We extend the application of our RSO algorithm to fine-tuning tasks. Specifically, we fine-tune pre-trained RoBERTa models \citep{liu2019roberta} on the GLUE benchmark \citep{wang2018glue}, which encompasses a diverse range of tasks, including question answering, sentiment analysis, and semantic textual similarity. Detailed settings can be found in Appendix \ref{app-finetuning}.

\textbf{Main results.}  
As shown in Table \ref{tab:glue-fine_tuning}, our RSO method surpasses other memory-efficient approaches and delivers performance comparable to Adam across most datasets in the GLUE benchmark. Notably, RSO significantly outperforms Adam on the CoLA and MRPC datasets when the rank is set to 8. Additional fine-tuning experiments on LLaMA and OPT models are provided in Appendix \ref{app-finetune-llamaopt}.

\begin{figure*}[!htbp]
    \centering
    \includegraphics[width=0.3\linewidth]{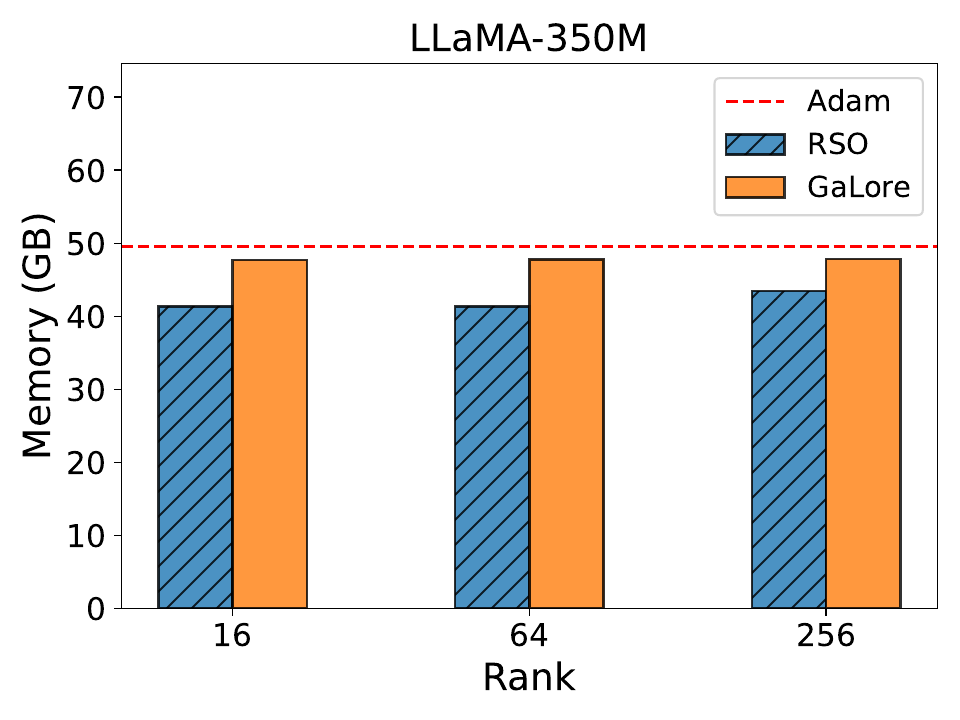}
    \hspace{0.5cm}
    \includegraphics[width=0.3\linewidth]{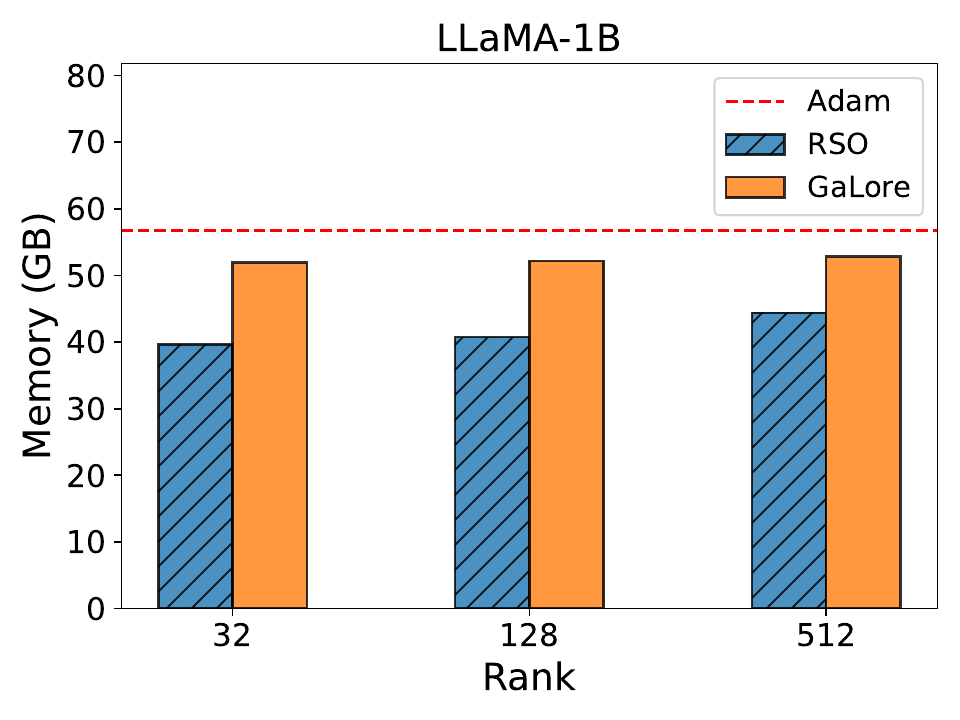}
    \hspace{0.5cm}
    \includegraphics[width=0.3\linewidth]{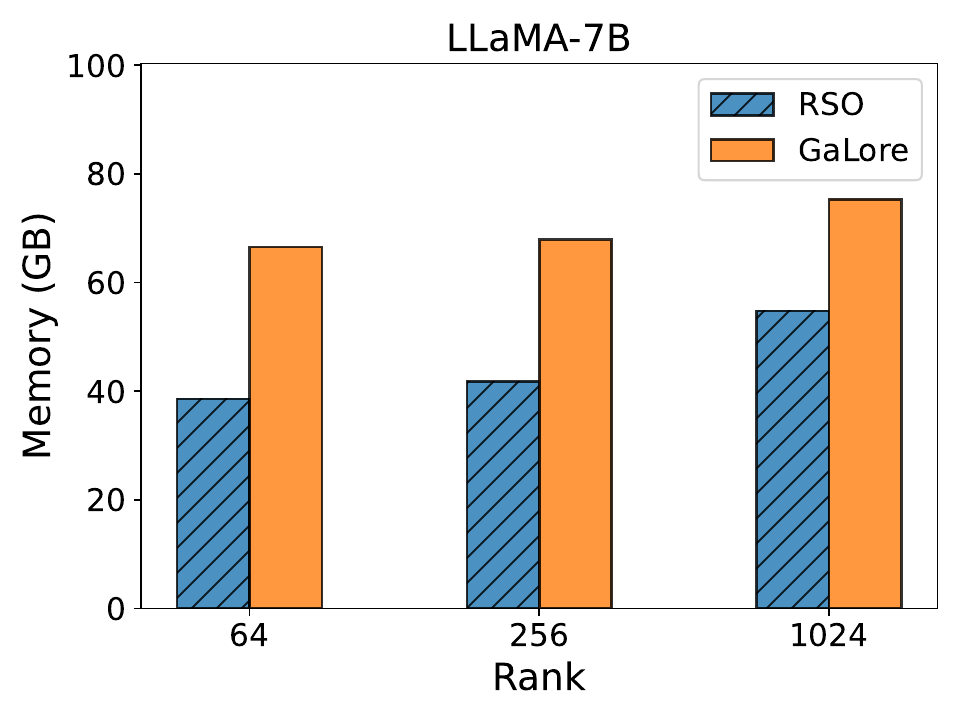}
    \caption{\small Comparison of peak memory usage (in GB) per device for RSO and GaLore during LLaMA training with varying ranks. All hyperparameters, except rank, are consistent with \citep{zhao2024galore}. Adam's memory usage is reported for LLaMA-350M and LLaMA-1B but excluded for LLaMA-7B due to an out-of-memory (OOM) error.}
    \label{fig:memory}
\end{figure*}

\subsection{Memory and Communication Efficiency}

To evaluate the memory and communication efficiency of our proposed method, we measure the peak memory usage and iteration time during the pre-training of LLaMA models of various sizes.

\textbf{RSO method requires less memory overhead.} Figure \ref{fig:memory} illustrates the actual memory usage during the training of LLaMA models. As shown, the RSO method incurs significantly lower memory overhead compared to GaLore and Adam. This reduction is attributed to RSO's ability to save memory for activations and use low-dimensional gradients. For instance, in the case of LLaMA-7B with a rank of 64, the RSO method achieves over a 40\% reduction in memory overhead compared to GaLore.

Additionally, in Figure \ref{fig:memory}, the memory gap between RSO and GaLore widens as the rank decreases. This is because, as indicated in Table \ref{tab:memory-analysis}, the RSO method further reduces memory consumption for activations with lower ranks, whereas GaLore does not benefit in this regard. For LLaMA-350M and LLaMA-1B, GaLore's memory usage is observed to be comparable to that of Adam, as activation memory dominates in these cases. However, RSO still achieves superior memory efficiency due to its reduced activation cost.

\begin{table*}[!htbp]
    \centering
    \setlength{\tabcolsep}{8pt} 
    \begin{tabular}{l ccc c ccc}
    \toprule
        \multirow{2}{*}{\textbf{Method}} & \multicolumn{3}{c}{\textbf{LLaMA-1B (Seconds)}} & \phantom{-} & \multicolumn{3}{c}{\textbf{LLaMA-7B (Seconds)}} \\
        \cmidrule{2-4} \cmidrule{6-8} 
        & \textbf{Seq 64} & \textbf{Seq 128} & \textbf{Seq 256} & & \textbf{Seq 64} & \textbf{Seq 128} & \textbf{Seq 256} \\
    \midrule
        RSO    & 0.94  & 1.70  & 3.29  & & 2.40  & 2.94  & 4.60  \\
        GaLore & 1.12  & 1.84  & 3.35  & & 7.86  & 8.26  & 9.12  \\
        Adam   & 1.11  & 1.81  & 3.32  & & 7.84  & 8.23  & OOM  \\
    \bottomrule
    \end{tabular}
    \caption{\small Comparison of iteration time (in seconds) for different methods in LLaMA training across various sequence lengths. All hyperparameters, except sequence length, follow \citep{zhao2024galore}. LLaMA-1B runs on $4\times$ A800 GPUs, while LLaMA-7B uses $8\times$ A800 GPUs. SVD decomposition time in GaLore is excluded. Additionally, for LLaMA-7B with a sequence length of 256, the Adam optimizer encounters an out-of-memory (OOM) error.}
    \label{tab:time}
\end{table*}

\textbf{RSO method requires less time per iteration.} Table \ref{tab:time} presents a comparison of the time required for one iteration across different methods when training LLaMA models. As shown, the RSO method requires significantly less time compared to GaLore or Adam due to its improved communication efficiency, achieved by reducing the dimensionality of gradients. For example, when training LLaMA-7B with a sequence length of 64, the time required by RSO is only one-third that of GaLore or Adam. Notably, while GaLore involves SVD decomposition (which is excluded from this measurement), RSO demonstrates even greater efficiency in training time.

As shown in Table \ref{tab:time}, the difference in iteration time between RSO and other approaches becomes more pronounced as model size increases or sequence length decreases. This phenomenon can be attributed to the communication overhead, which primarily stems from the synchronization of gradients across devices. Such overhead is more closely tied to model size while being less affected by sequence length. In contrast, the computational overhead is highly sensitive to sequence length. Therefore, RSO exhibits a more substantial advantage when the sequence length is considerably smaller than the model size, as communication overhead constitutes a larger fraction of the total iteration time under these conditions.

\section{Conclusion}

We propose a Randomized Subspace Optimization (RSO) method, aiming for Large Language Model (LLM) pre-training and fine-tuning. By decomposing the origanal training problem into small subproblems, our method achieves both memory and communication efficiency, while reach same level performance compared with GaLore and Adam.

We outline two directions for future work on the RSO method. First, further reduction of memory overhead for activations could be explored. While part of the activation memory has already been reduced, the remaining portion might be further optimized through alternative strategies for partitioning the original problem. Second, it is worth investigating the performance of various methods for solving the subproblems. Given the low-dimensional nature of the subproblems, exploring the application of second-order methods could be particularly promising.

\bibliographystyle{unsrtnat}
\bibliography{references}

\begin{thebibliography}{50}
\providecommand{\natexlab}[1]{#1}
\providecommand{\url}[1]{\texttt{#1}}
\expandafter\ifx\csname urlstyle\endcsname\relax
  \providecommand{\doi}[1]{doi: #1}\else
  \providecommand{\doi}{doi: \begingroup \urlstyle{rm}\Url}\fi

\bibitem[Brown(2020)]{brown2020language}
Tom~B Brown.
\newblock Language models are few-shot learners.
\newblock \emph{arXiv preprint ArXiv:2005.14165}, 2020.

\bibitem[Achiam et~al.(2023)Achiam, Adler, Agarwal, Ahmad, Akkaya, Aleman, Almeida, Altenschmidt, Altman, Anadkat, et~al.]{achiam2023gpt}
Josh Achiam, Steven Adler, Sandhini Agarwal, Lama Ahmad, Ilge Akkaya, Florencia~Leoni Aleman, Diogo Almeida, Janko Altenschmidt, Sam Altman, Shyamal Anadkat, et~al.
\newblock Gpt-4 technical report.
\newblock \emph{arXiv preprint arXiv:2303.08774}, 2023.

\bibitem[Dubey et~al.(2024)Dubey, Jauhri, Pandey, Kadian, Al-Dahle, Letman, Mathur, Schelten, Yang, Fan, et~al.]{dubey2024llama}
Abhimanyu Dubey, Abhinav Jauhri, Abhinav Pandey, Abhishek Kadian, Ahmad Al-Dahle, Aiesha Letman, Akhil Mathur, Alan Schelten, Amy Yang, Angela Fan, et~al.
\newblock The llama 3 herd of models.
\newblock \emph{arXiv preprint arXiv:2407.21783}, 2024.

\bibitem[Kingma(2014)]{kingma2014adam}
Diederik~P Kingma.
\newblock Adam: A method for stochastic optimization.
\newblock \emph{arXiv preprint arXiv:1412.6980}, 2014.

\bibitem[Loshchilov(2017)]{loshchilov2017decoupled}
I~Loshchilov.
\newblock Decoupled weight decay regularization.
\newblock \emph{arXiv preprint arXiv:1711.05101}, 2017.

\bibitem[Zhao et~al.(2024{\natexlab{a}})Zhao, Zhang, Chen, Wang, Anandkumar, and Tian]{zhao2024galore}
Jiawei Zhao, Zhenyu Zhang, Beidi Chen, Zhangyang Wang, Anima Anandkumar, and Yuandong Tian.
\newblock Galore: Memory-efficient llm training by gradient low-rank projection.
\newblock \emph{arXiv preprint arXiv:2403.03507}, 2024{\natexlab{a}}.

\bibitem[Zhang et~al.(2024{\natexlab{a}})Zhang, Li, Hong, Li, Zhang, Zheng, Chen, Lee, Yin, Hong, et~al.]{zhang2024revisiting}
Yihua Zhang, Pingzhi Li, Junyuan Hong, Jiaxiang Li, Yimeng Zhang, Wenqing Zheng, Pin-Yu Chen, Jason~D Lee, Wotao Yin, Mingyi Hong, et~al.
\newblock Revisiting zeroth-order optimization for memory-efficient llm fine-tuning: A benchmark.
\newblock \emph{arXiv preprint arXiv:2402.11592}, 2024{\natexlab{a}}.

\bibitem[Malladi et~al.(2023)Malladi, Gao, Nichani, Damian, Lee, Chen, and Arora]{malladi2023fine}
Sadhika Malladi, Tianyu Gao, Eshaan Nichani, Alex Damian, Jason~D Lee, Danqi Chen, and Sanjeev Arora.
\newblock Fine-tuning language models with just forward passes.
\newblock \emph{Advances in Neural Information Processing Systems}, 36:\penalty0 53038--53075, 2023.

\bibitem[Hu et~al.(2021)Hu, Shen, Wallis, Allen-Zhu, Li, Wang, Wang, and Chen]{hu2021lora}
Edward~J Hu, Yelong Shen, Phillip Wallis, Zeyuan Allen-Zhu, Yuanzhi Li, Shean Wang, Lu~Wang, and Weizhu Chen.
\newblock Lora: Low-rank adaptation of large language models.
\newblock \emph{arXiv preprint arXiv:2106.09685}, 2021.

\bibitem[Lialin et~al.(2023)Lialin, Muckatira, Shivagunde, and Rumshisky]{lialin2023relora}
Vladislav Lialin, Sherin Muckatira, Namrata Shivagunde, and Anna Rumshisky.
\newblock Relora: High-rank training through low-rank updates.
\newblock In \emph{The Twelfth International Conference on Learning Representations}, 2023.

\bibitem[Xia et~al.(2024)Xia, Qin, and Hazan]{xia2024chain}
Wenhan Xia, Chengwei Qin, and Elad Hazan.
\newblock Chain of lora: Efficient fine-tuning of language models via residual learning.
\newblock \emph{arXiv preprint arXiv:2401.04151}, 2024.

\bibitem[Thangarasa et~al.(2023)Thangarasa, Gupta, Marshall, Li, Leong, DeCoste, Lie, and Saxena]{thangarasa2023spdf}
Vithursan Thangarasa, Abhay Gupta, William Marshall, Tianda Li, Kevin Leong, Dennis DeCoste, Sean Lie, and Shreyas Saxena.
\newblock Spdf: Sparse pre-training and dense fine-tuning for large language models.
\newblock In \emph{Uncertainty in Artificial Intelligence}, pages 2134--2146. PMLR, 2023.

\bibitem[He et~al.(2024)He, Li, Hu, Chen, and Yuan]{he2024subspace}
Yutong He, Pengrui Li, Yipeng Hu, Chuyan Chen, and Kun Yuan.
\newblock Subspace optimization for large language models with convergence guarantees.
\newblock \emph{arXiv preprint arXiv:2410.11289}, 2024.

\bibitem[Hao et~al.(2024)Hao, Cao, and Mou]{hao2024flora}
Yongchang Hao, Yanshuai Cao, and Lili Mou.
\newblock Flora: Low-rank adapters are secretly gradient compressors.
\newblock \emph{arXiv preprint arXiv:2402.03293}, 2024.

\bibitem[Chen et~al.(2024{\natexlab{a}})Chen, Zhang, Cao, Yuan, and Wen]{chen2024enhancing}
Yiming Chen, Yuan Zhang, Liyuan Cao, Kun Yuan, and Zaiwen Wen.
\newblock Enhancing zeroth-order fine-tuning for language models with low-rank structures.
\newblock \emph{arXiv preprint arXiv:2410.07698}, 2024{\natexlab{a}}.

\bibitem[Zhang et~al.(2024{\natexlab{b}})Zhang, Chen, Li, Ding, Wu, Ye, Luo, and Sun]{zhang2024adam}
Yushun Zhang, Congliang Chen, Ziniu Li, Tian Ding, Chenwei Wu, Yinyu Ye, Zhi-Quan Luo, and Ruoyu Sun.
\newblock Adam-mini: Use fewer learning rates to gain more.
\newblock \emph{arXiv preprint arXiv:2406.16793}, 2024{\natexlab{b}}.

\bibitem[Zhu et~al.(2024)Zhu, Zhang, Cong, Liu, Park, Chandra, Long, Pan, Wang, and Lee]{zhu2024apollo}
Hanqing Zhu, Zhenyu Zhang, Wenyan Cong, Xi~Liu, Sem Park, Vikas Chandra, Bo~Long, David~Z Pan, Zhangyang Wang, and Jinwon Lee.
\newblock Apollo: Sgd-like memory, adamw-level performance.
\newblock \emph{arXiv preprint arXiv:2412.05270}, 2024.

\bibitem[Biderman et~al.(2024)Biderman, Portes, Ortiz, Paul, Greengard, Jennings, King, Havens, Chiley, Frankle, et~al.]{biderman2024lora}
Dan Biderman, Jacob Portes, Jose Javier~Gonzalez Ortiz, Mansheej Paul, Philip Greengard, Connor Jennings, Daniel King, Sam Havens, Vitaliy Chiley, Jonathan Frankle, et~al.
\newblock Lora learns less and forgets less.
\newblock \emph{arXiv preprint arXiv:2405.09673}, 2024.

\bibitem[Liang et~al.(2024)Liang, Liu, Chen, and Liu]{liang2024memory}
Kaizhao Liang, Bo~Liu, Lizhang Chen, and Qiang Liu.
\newblock Memory-efficient llm training with online subspace descent.
\newblock \emph{arXiv preprint arXiv:2408.12857}, 2024.

\bibitem[Chen et~al.(2024{\natexlab{b}})Chen, Feng, Li, Lai, Yue, Yuan, and Wang]{chen2024fira}
Xi~Chen, Kaituo Feng, Changsheng Li, Xunhao Lai, Xiangyu Yue, Ye~Yuan, and Guoren Wang.
\newblock Fira: Can we achieve full-rank training of llms under low-rank constraint?
\newblock \emph{arXiv preprint arXiv:2410.01623}, 2024{\natexlab{b}}.

\bibitem[Robert et~al.(2024)Robert, Safaryan, Modoranu, and Alistarh]{robert2024ldadam}
Thomas Robert, Mher Safaryan, Ionut-Vlad Modoranu, and Dan Alistarh.
\newblock Ldadam: Adaptive optimization from low-dimensional gradient statistics.
\newblock \emph{arXiv preprint arXiv:2410.16103}, 2024.

\bibitem[Shazeer and Stern(2018)]{shazeer2018adafactor}
Noam Shazeer and Mitchell Stern.
\newblock Adafactor: Adaptive learning rates with sublinear memory cost.
\newblock In \emph{International Conference on Machine Learning}, pages 4596--4604. PMLR, 2018.

\bibitem[Das(2024)]{das2024natural}
Arijit Das.
\newblock Natural galore: Accelerating galore for memory-efficient llm training and fine-tuning.
\newblock \emph{arXiv preprint arXiv:2410.16029}, 2024.

\bibitem[Wen et~al.(2025)Wen, Luo, Wang, Deng, Zou, Yuan, Sun, and Li]{wen2025breaking}
Ziqing Wen, Ping Luo, Jiahuan Wang, Xiaoge Deng, Jinping Zou, Kun Yuan, Tao Sun, and Dongsheng Li.
\newblock Breaking memory limits: Gradient wavelet transform enhances llms training.
\newblock \emph{arXiv preprint arXiv:2501.07237}, 2025.

\bibitem[Gautam et~al.(2024)Gautam, Park, Zhou, Raman, and Ha]{gautam2024variance}
Tanmay Gautam, Youngsuk Park, Hao Zhou, Parameswaran Raman, and Wooseok Ha.
\newblock Variance-reduced zeroth-order methods for fine-tuning language models.
\newblock \emph{arXiv preprint arXiv:2404.08080}, 2024.

\bibitem[Zhao et~al.(2024{\natexlab{b}})Zhao, Dang, Ye, Dai, Qian, and Tsang]{zhao2024second}
Yanjun Zhao, Sizhe Dang, Haishan Ye, Guang Dai, Yi~Qian, and Ivor~W Tsang.
\newblock Second-order fine-tuning without pain for llms: A hessian informed zeroth-order optimizer.
\newblock \emph{arXiv preprint arXiv:2402.15173}, 2024{\natexlab{b}}.

\bibitem[Duchi et~al.(2015)Duchi, Jordan, Wainwright, and Wibisono]{duchi2015optimal}
John~C Duchi, Michael~I Jordan, Martin~J Wainwright, and Andre Wibisono.
\newblock Optimal rates for zero-order convex optimization: The power of two function evaluations.
\newblock \emph{IEEE Transactions on Information Theory}, 61\penalty0 (5):\penalty0 2788--2806, 2015.

\bibitem[Nesterov and Spokoiny(2017)]{nesterov2017random}
Yurii Nesterov and Vladimir Spokoiny.
\newblock Random gradient-free minimization of convex functions.
\newblock \emph{Foundations of Computational Mathematics}, 17\penalty0 (2):\penalty0 527--566, 2017.

\bibitem[Berahas et~al.(2022)Berahas, Cao, Choromanski, and Scheinberg]{berahas2022theoretical}
Albert~S Berahas, Liyuan Cao, Krzysztof Choromanski, and Katya Scheinberg.
\newblock A theoretical and empirical comparison of gradient approximations in derivative-free optimization.
\newblock \emph{Foundations of Computational Mathematics}, 22\penalty0 (2):\penalty0 507--560, 2022.

\bibitem[Lai et~al.(2024)Lai, Tian, Chen, Li, Yuan, Liu, and Jia]{lai2024lisa}
Xin Lai, Zhuotao Tian, Yukang Chen, Yanwei Li, Yuhui Yuan, Shu Liu, and Jiaya Jia.
\newblock Lisa: Reasoning segmentation via large language model.
\newblock In \emph{Proceedings of the IEEE/CVF Conference on Computer Vision and Pattern Recognition}, pages 9579--9589, 2024.

\bibitem[Chen et~al.(2016)Chen, Xu, Zhang, and Guestrin]{chen2016training}
Tianqi Chen, Bing Xu, Chiyuan Zhang, and Carlos Guestrin.
\newblock Training deep nets with sublinear memory cost.
\newblock \emph{arXiv preprint arXiv:1604.06174}, 2016.

\bibitem[Dettmers et~al.(2023)Dettmers, Pagnoni, Holtzman, and Zettlemoyer]{dettmers2023qlora}
Tim Dettmers, Artidoro Pagnoni, Ari Holtzman, and Luke Zettlemoyer.
\newblock Qlora: efficient finetuning of quantized llms (2023).
\newblock \emph{arXiv preprint arXiv:2305.14314}, 52:\penalty0 3982--3992, 2023.

\bibitem[Ren et~al.(2021)Ren, Rajbhandari, Aminabadi, Ruwase, Yang, Zhang, Li, and He]{ren2021zero}
Jie Ren, Samyam Rajbhandari, Reza~Yazdani Aminabadi, Olatunji Ruwase, Shuangyan Yang, Minjia Zhang, Dong Li, and Yuxiong He.
\newblock $\{$Zero-offload$\}$: Democratizing $\{$billion-scale$\}$ model training.
\newblock In \emph{2021 USENIX Annual Technical Conference (USENIX ATC 21)}, pages 551--564, 2021.

\bibitem[Zhang et~al.(2023{\natexlab{a}})Zhang, Zhou, Xue, Liu, and Huang]{zhang2023g10}
Haoyang Zhang, Yirui Zhou, Yuqi Xue, Yiqi Liu, and Jian Huang.
\newblock G10: Enabling an efficient unified gpu memory and storage architecture with smart tensor migrations.
\newblock In \emph{Proceedings of the 56th Annual IEEE/ACM International Symposium on Microarchitecture}, pages 395--410, 2023{\natexlab{a}}.

\bibitem[Bottou(2010)]{bottou2010large}
L{\'e}on Bottou.
\newblock Large-scale machine learning with stochastic gradient descent.
\newblock In \emph{Proceedings of COMPSTAT'2010: 19th International Conference on Computational StatisticsParis France, August 22-27, 2010 Keynote, Invited and Contributed Papers}, pages 177--186. Springer, 2010.

\bibitem[Sutskever et~al.(2013)Sutskever, Martens, Dahl, and Hinton]{sutskever2013importance}
Ilya Sutskever, James Martens, George Dahl, and Geoffrey Hinton.
\newblock On the importance of initialization and momentum in deep learning.
\newblock In \emph{International conference on machine learning}, pages 1139--1147. PMLR, 2013.

\bibitem[Zhang et~al.(2023{\natexlab{b}})Zhang, Liu, and Shao]{zhang2023fine}
Zhong Zhang, Bang Liu, and Junming Shao.
\newblock Fine-tuning happens in tiny subspaces: Exploring intrinsic task-specific subspaces of pre-trained language models.
\newblock In \emph{The 61st Annual Meeting Of The Association For Computational Linguistics}, 2023{\natexlab{b}}.

\bibitem[Wright(2015)]{wright2015coordinate}
Stephen~J Wright.
\newblock Coordinate descent algorithms.
\newblock \emph{Mathematical programming}, 151\penalty0 (1):\penalty0 3--34, 2015.

\bibitem[Shamir(2013)]{shamir2013complexity}
Ohad Shamir.
\newblock On the complexity of bandit and derivative-free stochastic convex optimization.
\newblock In Shai Shalev-Shwartz and Ingo Steinwart, editors, \emph{Proceedings of the 26th Annual Conference on Learning Theory}, volume~30 of \emph{Proceedings of Machine Learning Research}, pages 3--24, Princeton, NJ, USA, 12--14 Jun 2013. PMLR.

\bibitem[Nesterov et~al.(2018)]{nesterov2018lectures}
Yurii Nesterov et~al.
\newblock \emph{Lectures on convex optimization}, volume 137.
\newblock Springer, 2018.

\bibitem[Bottou et~al.(2018)Bottou, Curtis, and Nocedal]{bottou2018optimization}
L{\'e}on Bottou, Frank~E Curtis, and Jorge Nocedal.
\newblock Optimization methods for large-scale machine learning.
\newblock \emph{SIAM review}, 60\penalty0 (2):\penalty0 223--311, 2018.

\bibitem[Yuan et~al.(2016)Yuan, Ying, and Sayed]{yuan2016influence}
Kun Yuan, Bicheng Ying, and Ali~H Sayed.
\newblock On the influence of momentum acceleration on online learning.
\newblock \emph{Journal of Machine Learning Research}, 17\penalty0 (192):\penalty0 1--66, 2016.

\bibitem[Guo et~al.(2024)Guo, Xu, Yin, Jin, and Yang]{guo2024unifiedconvergenceanalysisadaptive}
Zhishuai Guo, Yi~Xu, Wotao Yin, Rong Jin, and Tianbao Yang.
\newblock Unified convergence analysis for adaptive optimization with moving average estimator.
\newblock \emph{arXiv preprint arXiv:2104.14840}, 2024.

\bibitem[Boyd and Vandenberghe(2004)]{boyd2004convex}
Stephen Boyd and Lieven Vandenberghe.
\newblock \emph{Convex optimization}.
\newblock Cambridge university press, 2004.

\bibitem[Byrd et~al.(2016)Byrd, Hansen, Nocedal, and Singer]{byrd2016stochastic}
Richard~H Byrd, Samantha~L Hansen, Jorge Nocedal, and Yoram Singer.
\newblock A stochastic quasi-newton method for large-scale optimization.
\newblock \emph{SIAM Journal on Optimization}, 26\penalty0 (2):\penalty0 1008--1031, 2016.

\bibitem[Kozak et~al.(2023)Kozak, Molinari, Rosasco, Tenorio, and Villa]{kozak2023zeroth}
David Kozak, Cesare Molinari, Lorenzo Rosasco, Luis Tenorio, and Silvia Villa.
\newblock Zeroth-order optimization with orthogonal random directions.
\newblock \emph{Mathematical Programming}, 199\penalty0 (1):\penalty0 1179--1219, 2023.

\bibitem[Raffel et~al.(2020)Raffel, Shazeer, Roberts, Lee, Narang, Matena, Zhou, Li, and Liu]{raffel2020exploring}
Colin Raffel, Noam Shazeer, Adam Roberts, Katherine Lee, Sharan Narang, Michael Matena, Yanqi Zhou, Wei Li, and Peter~J Liu.
\newblock Exploring the limits of transfer learning with a unified text-to-text transformer.
\newblock \emph{Journal of machine learning research}, 21\penalty0 (140):\penalty0 1--67, 2020.

\bibitem[Liu et~al.(2019)Liu, Ott, Goyal, Du, Joshi, Chen, Levy, Lewis, Zettlemoyer, and Stoyanov]{liu2019roberta}
Yinhan Liu, Myle Ott, Naman Goyal, Jingfei Du, Mandar Joshi, Danqi Chen, Omer Levy, Mike Lewis, Luke Zettlemoyer, and Veselin Stoyanov.
\newblock Roberta: A robustly optimized bert pretraining approach.
\newblock \emph{arXiv preprint arXiv:1907.11692}, 2019.

\bibitem[Wang(2018)]{wang2018glue}
Alex Wang.
\newblock Glue: A multi-task benchmark and analysis platform for natural language understanding.
\newblock \emph{arXiv preprint arXiv:1804.07461}, 2018.

\bibitem[Vaswani(2017)]{vaswani2017attention}
A~Vaswani.
\newblock Attention is all you need.
\newblock \emph{Advances in Neural Information Processing Systems}, 2017.

\end{thebibliography}

\appendix

\section{Memory Complexity Analysis}\label{app-memory}
In this section, we analyze the memory overhead of our proposed RSO algorithm for one typical transformer block.

\subsection{Transformer Structure}
Transformers \cite{vaswani2017attention} have become a foundational component of LLMs. Here, we focus on the forward and backward propagation processes within a single transformer block using our RSO algorithm.

\textbf{Forward Propagation.} Consider the input \( X \in \mathbb{R}^{s \times n} \) to a transformer block, where \( s \) is the sequence length and \( n \) is the embedding dimension. The attention mechanism within the transformer block performs the following linear operations:
\begin{equation}\label{eqa-forward-qkv}
    Q = X(W_q + P_q B_q), \quad K = X(W_k + P_k B_k), \quad V = X(W_v + P_v B_v),
\end{equation}
where \( W_q, W_k, W_v \in \mathbb{R}^{n \times n} \) are the original weight matrices, \( P_q, P_k, P_v \in \mathbb{R}^{n \times r} \) are the projection matrices, and \( B_q, B_k, B_v \in \mathbb{R}^{r \times n} \) are the low-rank weight matrices used in the RSO method for each subproblem. These intermediate values are then combined as follows:
\begin{equation}\label{eqa-forward-attention}
    \tilde A_s = QK^\top, \quad A_s = \sigma_s\bigg(\frac{\tilde A_s}{\sqrt{n}}\bigg), \quad A_h = A_s V, \quad A_o = A_h (W_o + P_o B_o),
\end{equation}
where \( \sigma_s \) represents the softmax activation function, and \( W_o \in \mathbb{R}^{n \times n} \) is the output projection matrix.

Next, the feed-forward network consists of two fully-connected layers, which are computed as:
\begin{equation}\label{eqa-forward-ffn}
       \tilde Z_1 = A_o (W_1 + P_1 B_1), \quad Z_1 = \sigma(\tilde Z_1), \quad Z_2 = Z_1 (W_2 + P_2 B_2), 
\end{equation}
where \( W_1 \in \mathbb{R}^{n \times 4n} \) and \( W_2 \in \mathbb{R}^{4n \times n} \) are the weights of the feed-forward layers. We assume that the intermediate dimension of the feed-forward network is four times the embedding dimension. Similarly, \( P_1 \in \mathbb{R}^{n \times r}, P_2 \in \mathbb{R}^{4n \times r} \) are the projection matrices, and \( B_1 \in \mathbb{R}^{r \times 4n}, B_2 \in \mathbb{R}^{r \times n} \) are the low-rank trainable parameters for RSO. The function \( \sigma \) represents the activation function.

For the Adam and GaLore methods, there are no \( P \) and \( B \) matrices, as they directly work with the original weight matrices. In the case of the LoRA method, each projection matrix \( P \) is replaced by a trainable parameter \( A \).

\textbf{Backward Propagation.} To calculate the gradients of all the weight matrices, the backward propagation begins with the partial gradient of the loss function \( F \) with respect to the output of this block, denoted as \(\mathcal{D} Z_2 := \frac{\partial F}{\partial Z_2}\). Here, we use \(\mathcal{D}\) to represent the derivative of \( F \) with respect to any matrix. Note that in our RSO algorithm, we compute the gradient with respect to \( B \), the low-rank trainable parameters, instead of the original weight matrix \( W \). The gradients for the weights in the feed-forward network are computed as follows:
\begin{equation}\label{eqa-backward-ffn}
    \mathcal{D} B_2 = (Z_1P_2)^\top \mathcal{D} Z_2, \
    \mathcal{D} \tilde{Z}_1 = \mathcal{D} Z_2 (W_2 + P_2 B_2)^\top \odot \sigma'(\tilde{Z}_1), \
    \mathcal{D} B_1 = (A_oP_1)^\top \mathcal{D} \tilde{Z}_1, \
    \mathcal{D} A_o = \mathcal{D} \tilde{Z}_1 (W_1 + P_1 B_1)^\top.
\end{equation}

For the attention mechanism, the gradients for the corresponding matrices are calculated as:
\begin{equation}\label{eqa-backward-attention}
\begin{aligned}
    \mathcal{D} B_o = (A_h P_o)^\top \mathcal{D} A_o, \quad 
    \mathcal{D} A_h = \mathcal{D} A_o (W_o + P_o B_o)^\top, \quad
    \mathcal{D} A_s = \mathcal{D} A_h V^\top.
\end{aligned}
\end{equation}

To compute the gradients for the matrices \( Q, K, V \), the following equations are used:
\begin{equation}\label{eqa-backward-qkvmat}
    \mathcal{D} V = A_s^\top \mathcal{D} A_h, \quad 
    \mathcal{D} Q = \left[\mathcal{D} A_s \odot \frac{1}{\sqrt{n}}\sigma_s'\left(\frac{\tilde{A}_s}{\sqrt{n}}\right)\right] K, \quad 
    \mathcal{D} K = \left[\mathcal{D} A_s \odot \frac{1}{\sqrt{n}}\sigma_s'\left(\frac{\tilde{A}_s}{\sqrt{n}}\right)\right]^\top Q.
\end{equation}

The gradients for the low-rank weight matrices \( B_q, B_k, B_v \) are computed as follows:
\begin{equation}\label{eqa-backward-qkv}
    \mathcal{D} B_v = (X P_v)^\top \mathcal{D} V, \quad 
    \mathcal{D} B_q = (X P_q)^\top \mathcal{D} Q, \quad 
    \mathcal{D} B_k = (X P_k)^\top \mathcal{D} K.
\end{equation}

Finally, to ensure that the backward propagation process can continue, the derivative with respect to the input \( X \) must also be calculated. This is given by:
\begin{equation}\label{eqa-backward-x}
    \mathcal{D} X = \mathcal{D} Q (W_q + P_q B_q)^\top + \mathcal{D} K (W_k + P_k B_k)^\top + \mathcal{D} V (W_v + P_v B_v)^\top.
\end{equation}

When using the Adam or GaLore algorithms, the derivatives must be computed with respect to the original weight matrix \( W \) instead of the low-rank matrix \( B \). As a result, all occurrences of \( \mathcal{D} B \) need to be replaced with \( \mathcal{D} W \). For example, the derivatives with respect to \( W_q, W_k, \) and \( W_v \) are computed as follows:
\[
    \mathcal{D} W_v = X^\top \mathcal{D} V, \quad 
    \mathcal{D} W_q = X^\top \mathcal{D} Q, \quad 
    \mathcal{D} W_k = X^\top \mathcal{D} K.
\]

\subsection{Memory for Optimizer States Analysis}
For Adam algorithm, the trainable parameters include \( W_q, W_k, W_v, W_o, W_1, W_2 \). It is straightforward to compute the total number of parameters as \( 12n^2 \). Consequently, the optimizer states, considering both the first-order and second-order moments, require \( 24n^2 \) storage.

For RSO method, the trainable weights for each subproblem are \( B_q, B_k, B_v, B_o, B_1, B_2 \), with a total of \( 12nr \) parameters per subproblem, leading to an optimizer state storage requirement of \( 24nr \). LoRA trains an additional matrix \( A \) (corresponding to the matrix \( P \) used above), resulting in twice the optimizer state memory required compared to RSO. GaLore projects each gradient matrix from \( \mathbb{R}^{n \times n} \) to \( \mathbb{R}^{r \times n} \), resulting in the same optimizer state memory requirement of \( 24nr \).

\subsection{Memory for Activations Analysis}

From the backward propagation process, it is evident that the activations generated during forward propagation are required. Specifically, in (\ref{eqa-forward-qkv}), the matrices \( X, Q, K, V \) need to be stored, resulting in the following memory requirement: \( M_1 = 4sn \).

However, in our RSO algorithm, \( X \) is only needed to compute \( \cD B_q, \cD B_k, \cD B_v \), where only the projections \( XP_v, XP_q, XP_k \) are required. By setting \( P_q = P_k = P_v = P \), we only need to store \( XP \), reducing the memory requirement to \( \tilde{M}_1 = 3sn + sr \).

Additionally, in (\ref{eqa-forward-attention}), the matrices \( \tilde{A}_s, A_s, A_h, A_o \) must be stored, requiring \( M_2 = 2s^2 + 2sn \). It is worth noting that \( \frac{\tilde{A}_s}{\sqrt{n}} \) does not need to be stored, as \( A_s \) can be used to recover it due to the properties of the softmax function. For the RSO algorithm, storing \( A_h \) and \( A_o \) is unnecessary, as \( A_h P_o \) and \( A_o P_1 \) suffice. Consequently, the memory requirement is reduced to \( \tilde{M}_2 = 2s^2 + 2sr \).

For the feed-forward network in (\ref{eqa-forward-ffn}), the matrices \( \tilde{Z}_1, Z_1, Z_2 \) need to be stored, resulting in a memory requirement of \( M_3 = 9sn \). In the RSO algorithm, \( Z_1 \) can be replaced with \( Z_1 P_2 \), reducing the memory requirement to \( \tilde{M}_3 = 5sn + sr \).

Combining these results, the total memory cost for activations in the RSO algorithm is 
\[ \tilde{M}_{\text{total}} = 8sn + 2s^2 + 4sr, \] 
compared to the memory cost in Adam or GaLore: 
\[ M_{\text{total}} = 15sn + 2s^2. \]
As the LoRA method trains parameters \( A \) (corresponding to \( P \) in our method), it requires the same activations as Adam, resulting in the same memory overhead as Adam or GaLore.

\section{Convergence Analysis}\label{app-proof}
In this section, we present the convergence analysis of the RSO algorithm and provide a detailed proof of Theorem~\ref{thm-rso}. 

Under Assumption \ref{ass-orthogonal}, the following properties hold and can be straightforwardly derived:  
\begin{align*}
    \|\bP \bW\| &= \sqrt{\sum\limits_{\ell}\mathrm{tr}(W_\ell^\top P_\ell^T P_\ell W_\ell)} \leq \sqrt{\max\limits_\ell\{m_\ell/r_\ell\}} \|\bW\|,
\end{align*}  
where \( \bW = \{W_\ell\}_{\ell=1}^\cL \) denotes any family of matrices with \( W_\ell \in \mathbb{R}^{m_\ell \times r_\ell} \). For simplicity, we will not explicitly reference these properties when they are used.

\begin{lemma} Under Assumptions \ref{ass-smooth} and \ref{ass-orthogonal},  $g^k(\bB)$ is \((\frac{1}{\eta^k} - \hat{L} )\)-strongly convex and \((\frac{1}{\eta^k}+ \hat{L})\)-smooth, with \(0 < \eta < 1/\hat{L} \) and \(\hat{L}=\max\limits_{\ell} \{m_\ell/r_\ell\}L\).
\end{lemma}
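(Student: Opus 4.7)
The plan is to decompose $g^k(\bB) = h(\bB) + \frac{1}{2\eta^k}\|\bB\|^2$ where $h(\bB) := f(\bW^k + \bP^k \bB)$, establish that $h$ is $\hat{L}$-smooth, and then observe that the quadratic penalty contributes exactly $\frac{1}{\eta^k}\|\bD\|^2$ to every second-difference expansion. Smoothness of $h$ with constant $\hat{L}$ yields the two-sided descent inequality $-\frac{\hat{L}}{2}\|\bD\|^2 \le h(\bB+\bD) - h(\bB) - \langle \nabla h(\bB),\bD\rangle \le \frac{\hat{L}}{2}\|\bD\|^2$, so adding the quadratic penalty translates both sides into $\frac{1}{2}\left(\frac{1}{\eta^k}\pm\hat{L}\right)\|\bD\|^2$, which is precisely $(1/\eta^k - \hat{L})$-strong convexity together with $(1/\eta^k + \hat{L})$-smoothness of $g^k$ once $\eta^k < 1/\hat{L}$ ensures the lower coefficient is positive.

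The bulk of the work sits in verifying the $\hat{L}$-smoothness of $h$. By the chain rule, $\nabla h(\bB) = (\bP^k)^\top \nabla f(\bW^k + \bP^k \bB)$, where the adjoint $(\bP^k)^\top$ acts blockwise as $V_\ell \mapsto (P^k_\ell)^\top V_\ell$. Assumption \ref{ass-orthogonal} gives $P_\ell^\top P_\ell = (m_\ell/r_\ell)I_{r_\ell}$, which forces every nonzero singular value of $P^k_\ell$ to equal $\sqrt{m_\ell/r_\ell}$; hence the operator norm of the block-diagonal map $\bP^k$, viewed as acting on the $\bB$-space with the Frobenius norm, is $\sqrt{\max_\ell m_\ell/r_\ell}$, and the same bound holds for its adjoint. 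Chaining this with Assumption \ref{ass-smooth} gives
\[
\|\nabla h(\bB_1) - \nabla h(\bB_2)\| \le \sqrt{\max_\ell \tfrac{m_\ell}{r_\ell}}\,\|\nabla f(\bW^k + \bP^k\bB_1) - \nabla f(\bW^k + \bP^k\bB_2)\| \le \hat{L}\,\|\bB_1 - \bB_2\|,
\]
which is the promised $\hat{L}$-smoothness of $h$.

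I do not anticipate a substantive obstacle: the only non-mechanical ingredient is the singular-value computation for $P^k_\ell$, which is a one-line consequence of Assumption \ref{ass-orthogonal}. It is worth emphasizing that the argument uses the deterministic identity $P_\ell^\top P_\ell = (m_\ell/r_\ell)I_{r_\ell}$ rather than the weaker expectation-level identity $\E[P_\ell P_\ell^\top] = I_{m_\ell}$, so the bound is pathwise: every realization of $\bP^k$ makes $g^k$ strongly convex, and therefore $\bB^k_\star$ is uniquely determined for each draw of the projection, consistent with the uniqueness claim made in the surrounding text.
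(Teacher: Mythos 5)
Your proposal is correct and follows essentially the same route as the paper: decompose $g^k$ into $h^k(\bB)=f(\bW^k+\bP^k\bB)$ plus the quadratic, prove $\hat L$-smoothness of $h^k$ via the chain rule and the fact that $P_\ell^\top P_\ell=(m_\ell/r_\ell)I_{r_\ell}$ forces $\|P_\ell\|_{\mathrm{op}}=\sqrt{m_\ell/r_\ell}$, then add the exact quadratic expansion of the proximal term to shift the two-sided descent inequality into the claimed strong-convexity and smoothness moduli. If anything, your explicit use of the operator norm is cleaner than the paper's intermediate bound written with $\|P_\ell^k\|_F^2$, and your remark that the conclusion is pathwise (per realization of $\bP^k$) matches the paper's use of the result.
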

\begin{proof}
 We denote the gradient of \( f \) with respect to the parameters in the \(\ell\)-th layer by \(\nabla_\ell f(\bW)\). Let \( h^k(\bB) := f(\bW^k + \bP^k \bB) \). It can be shown that \( h^k \) is \(\hat{L}\)-Lipschitz smooth, as follows:
\begin{align*}    
    \|\nabla h^k(\bB^1) - \nabla h^k(\bB^2) \|^2 & = \sum_{\ell=1}^\cL \left\|\left(\frac{\partial f }{\partial B^1_l}(\bW^k + \bP^k \bB^1_\ell) - \frac{\partial f}{\partial B_\ell^2}(\bW^k+\bP^k\bB^2) \right)  \right \|_F^2 \\
    &  = \sum_{\ell=1}^\cL \left \| (P_\ell^k)^\top ( \nabla_\ell f(\bW^k + \bP^k \bB^1) - \nabla_\ell f(\bW^k + \bP^k \bB^2))  \right\|_F^2 \\ 
    & \leq \sum_{\ell=1}^\cL \| P_\ell^k\|^2_F \|\nabla_\ell f(\bW^k + \bP^k \bB^1) - \nabla_\ell f(\bW^k + \bP^k \bB^2)) \|_F^2\\
    & \leq \max\limits_\ell\left(\frac{m_\ell}{r_\ell}\right)\ \sum_{\ell=1}^\cL \|\nabla_\ell f(\bW^k + \bP^k \bB^1) - \nabla_\ell f(\bW^k + \bP^k \bB^2)) \|_F^2\\
    & = \max\limits_\ell\left(\frac{m_\ell}{r_\ell}\right)\|\nabla f(\bW^k + \bP^k \bB^1) - \nabla f(\bW^k + \bP^k \bB^2)) \|^2\\
    & \leq L^2\max\limits_\ell\left(\frac{m_\ell}{r_\ell}\right)\ \|\bP^k\bB^1 - \bP^k\bB^2\|^2 \leq \hat L^2\|\bB^1-\bB^2\|^2.
\end{align*}
As \(h^k\) is $\hat L$-Lipschitz smooth, we can conclude that \(g^k(\bB) = h^k(\bB) + \frac{1}{2\eta^k}\|\bB\|^2\) is \((\frac{1}{\eta^k}+ \hat{L})\)-smooth. Furthermore, 
we have the inequality
\[
| h^k(\bB^2) - h^k(\bB^1) - \langle \nabla h^k(\bB^1),\bB^2-\bB^1 \rangle | \leq \frac{\hat{L}}{2}\|\bB^1 - \bB^2\|^2.
\]
Based on this inequality, with the definition of \(g^k\), we have
\begin{align*}
    g^k(\bB^2) & \geq g^k(\bB^1)  - \frac{1}{2\eta^k}\|\bB^1\|^2 +  \frac{1}{2\eta^k}\| \bB^2\|^2 + \langle \nabla h^k(\bB^1),\bB^2-\bB^1 \rangle - \frac{\hat{L}}{2}\|\bB^1 - \bB^2\|^2 \\
    & = g^k(\bB_1) + \left\langle \nabla h^k(\bB_1)+ \frac{1}{\eta^k}\bB_1,\bB_2 - \bB_1 \right\rangle + \left(  \frac{1}{2\eta^k} - \frac{\hat{L}}{2} \right)\|\bB_1 - \bB_2\|^2 \\
    & = g^k(\bB_1) + \langle \nabla g^k(\bB_1), \bB_2 - \bB_1\rangle + \left(  \frac{1}{2\eta^k} - \frac{\hat{L}}{2} \right)\|\bB_1 - \bB_2\|^2,
\end{align*}
which shows that \(g^k(\bB)\) is \((\frac{1}{\eta^k} - \hat{L})\)-strongly convex when \(0 < \eta < 1/\hat{L}\).
\end{proof}

\begin{theorem}[Theorem \ref{thm-rso}]Under Assumptions \ref{ass-smooth} and \ref{ass-orthogonal}, let the subproblem (\ref{eqa-outer-step}) is solved from the initial point \(\bB^0 = \vzero\) to an expected \(\epsilon\)-inexact solution $\tilde \bB^k$ with \(\eta^k = \frac{1}{2\hat{L}}\), the sequence \(\{ \bW^k \}\) generated by the RSO  method satisfies
\begin{equation}
    \frac{1}{K}\sum_{k=0}^{K-1}\EE[\| \nabla f(\bW^k)\|^2] \leq \frac{18\hat{L} (f(\bW^0) - f^\star)}{ K} + 18\hat{L} \epsilon.
\end{equation}
\end{theorem}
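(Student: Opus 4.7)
The plan is to establish a one-step expected descent inequality of the form
\begin{equation*}
\EE[f(\bW^{k+1})] \leq \EE[f(\bW^k)] - c\,\EE[\|\nabla f(\bW^k)\|^2] + \epsilon
\end{equation*}
with $c = \Theta(1/\hat L)$, and then telescope. The crucial probabilistic ingredient, supplied by Assumption \ref{ass-orthogonal}, is the identity
\begin{equation*}
\EE_{\bP^k}\bigl[\|\bP^{k\top}\nabla f(\bW^k)\|^2 \,\big|\, \bW^k\bigr] = \|\nabla f(\bW^k)\|^2,
\end{equation*}
which follows layerwise from $\|P_\ell^\top G\|_F^2 = \tr(G^\top P_\ell P_\ell^\top G)$ combined with $\EE[P_\ell P_\ell^\top] = I_{m_\ell}$.

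The first step is to upper bound $g^k(\bB^k_\star)$ in terms of $f(\bW^k)$ and $\|\bP^{k\top}\nabla f(\bW^k)\|^2$. Since $\bB^k_\star$ is only implicitly defined by the stationarity condition $\bB^k_\star = -\eta^k \bP^{k\top}\nabla f(\bW^k + \bP^k \bB^k_\star)$, I would sidestep the implicitness by evaluating the subproblem at the explicit reference point $\bB^k_{\rm ref} := -\eta^k \bP^{k\top}\nabla f(\bW^k) = -\eta^k\nabla g^k(\vzero)$, a single gradient step on $g^k$ from $\vzero$. Applying $\hat L$-smoothness of $h^k(\bB) := f(\bW^k + \bP^k\bB)$ (the lemma preceding the theorem in Appendix \ref{app-proof}) to $h^k(\bB^k_{\rm ref})$, then adding back the proximal term $\frac{\eta^k}{2}\|\bP^{k\top}\nabla f(\bW^k)\|^2$, and plugging in $\eta^k = 1/(2\hat L)$, yields
\begin{equation*}
g^k(\bB^k_\star) \leq g^k(\bB^k_{\rm ref}) \leq f(\bW^k) - \kappa\,\|\bP^{k\top}\nabla f(\bW^k)\|^2,
\end{equation*}
for an explicit $\kappa$ of order $1/\hat L$. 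An equivalent route is the Polyak--\L{}ojasiewicz inequality applied to the $(1/\eta^k+\hat L)$-smooth convex function $g^k$ at $\vzero$.

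The second step combines this with the expected $\epsilon$-inexactness condition applied conditionally on $(\bW^k,\bP^k)$: $\EE[g^k(\tilde\bB^k)\mid \bW^k,\bP^k] \leq g^k(\bB^k_\star)+\epsilon$. The clean observation is $g^k(\tilde\bB^k) = f(\bW^{k+1}) + \frac{1}{2\eta^k}\|\tilde\bB^k\|^2 \geq f(\bW^{k+1})$, so the nonnegative proximal term can be dropped for free. Taking iterated expectations and invoking the probabilistic identity from the first paragraph (legitimate because $\bW^k$ is independent of $\bP^k$) delivers the desired one-step descent inequality. Summing from $k=0$ to $K-1$, lower-bounding $f(\bW^K)$ by $f^\star$, and dividing by $K$ yields the bound in the statement; the precise constant $18\hat L$ arises from careful book-keeping of the strong-convexity and smoothness constants $1/\eta^k \pm \hat L$ recorded in the appendix lemma.

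The main obstacle is the implicit definition of $\bB^k_\star$: one cannot directly plug a closed-form expression for it into a Taylor expansion of $f$. Introducing $\bB^k_{\rm ref}$ as a surrogate and exploiting the restricted smoothness constant $\hat L = \max_\ell(m_\ell/r_\ell)L$ of $h^k$ (rather than the ambient constant $L$ of $f$) is the key trick, since $\hat L$ is precisely what the step-size $\eta^k$ is calibrated against and hence governs the magnitude of the descent. A secondary technical care is the dependence structure among $\bW^k$, $\bP^k$, and the subsolver's internal randomness inside $\tilde\bB^k$; this is cleanly resolved by applying the inexactness condition conditionally on $(\bW^k,\bP^k)$ and then invoking the tower rule.
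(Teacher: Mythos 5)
Your proposal is correct, but it follows a genuinely different route from the paper's. The paper never constructs a one-step descent on $f$: it instead uses $\mu$-strong convexity of $g^k$ at $\bB=\vzero$ to get $\EE[g^k(\tilde\bB^k)] \leq g^k(\vzero) - \frac{\mu}{2}\|\bB^k_\star\|^2 + \epsilon$, telescopes the $g^k$ values via the identity $g^{k+1}(\vzero) = g^k(\tilde\bB^k) - \frac{1}{2\eta^k}\|\tilde\bB^k\|^2$, and only afterwards converts the accumulated $\|\bB^k_\star\|^2$ terms into gradient norms through the smoothness bound $\|\nabla g^k(\vzero)\|^2 \leq (\hat L + 1/\eta^k)^2\|\bB^k_\star\|^2$; with $\eta^k = 1/(2\hat L)$ this costs a factor $\frac{\mu}{2}(3\hat L)^{-2} = \frac{1}{18\hat L}$, which is exactly where the constant $18$ comes from. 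Your surrogate-point argument — bounding $g^k(\bB^k_\star) \leq g^k(\bB^k_{\rm ref})$ with $\bB^k_{\rm ref} = -\eta^k\nabla g^k(\vzero)$ and invoking the descent lemma for the $\hat L$-smooth $h^k$ — sidesteps the lossy detour through $\|\bB^k_\star\|$ entirely and yields the sharper coefficient $\frac{\eta^k}{2} - \frac{\hat L(\eta^k)^2}{2} = \frac{1}{8\hat L}$, hence a bound with constant $8\hat L$ rather than $18\hat L$ (your closing remark that the $18$ emerges from your book-keeping is the one inaccuracy: your route does not reproduce $18$, it improves on it, which of course still implies the stated inequality). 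Both arguments use the same two structural facts — the appendix lemma giving $\hat L$-smoothness and $(\frac{1}{\eta^k}-\hat L)$-strong convexity of $g^k$, and the isotropy identity $\EE[P_\ell P_\ell^\top] = I_{m_\ell}$ — and both handle the randomness by conditioning on $(\bW^k,\bP^k)$ and applying the tower rule, so the probabilistic content is identical; what your version buys is a cleaner per-iteration descent statement and a better constant, while the paper's version has the mild aesthetic advantage of never needing an explicit reference point.
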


\begin{proof}
For simplicity, we denote \(\mu := \frac{1}{\eta^k} - \hat{L} \). As $g^k(\bB)$ is $\mu-$strongly convex, \(\forall\ \bB\), we have
\[
g^k(\bB^k_\star) \leq g^k(\bB) - \frac{\mu}{2}\|\bB^k_\star - \bB \|^2.
\]
Let \(\bB = \vzero\) and using the definition of \(\tilde{\bB}^k\), we can obtain a descent condition as
\[
\EE[g^k(\tilde{\bB}^k)] \leq g^k(\vzero) - \frac{\mu}{2}\|\bB^k_\star\|^2 + \epsilon.
\]
Taking the expectation with respect to the initial condition \(\bW^k\) and random matrix \(\bP^k\), by the tower rule, it can be derived
\[
\EE[g^k(\tilde{\bB}^k)] \leq \EE[g^k(\vzero)] - \frac{\mu}{2}\EE[\|\bB^k_\star\|^2] + \epsilon.
\]
Telescoping the above inequality from $k=0$ to $K-1$, we have
\[
\sum_{k=0}^{K-1}\frac{\mu}{2}\EE[\|\bB^k_\star \|^2] \leq \sum_{k=0}^{K-1}\EE[g^k(\vzero)] - \sum_{k=0}^{K-1} \EE[g^k (\tilde{\bB}^k)] + \epsilon.
\]
Notice that, by the update rule of (\ref{eqa-outer-step}), \(g^{k+1}(\vzero) =f(\bW^{k+1}) = f(\bW^k +\bP^k \tilde{\bB}^k) = g^k(\tilde{\bB}^k) - \frac{1}{2\eta^k}\| \tilde{\bB}^k\|^2\), the terms in the RHS of the above inequality can be canceled with each other and resulting in the following inequality: 
\[
\sum_{k=0}^{K-1}\frac{\mu}{2}\EE[\|\bB^k_\star\|^2] \leq g^0(\vzero) - \EE[g^{K-1}(\tilde{\bB}^{K-1})] - \sum_{k=0}^{K-2} \frac{1}{2\eta^k}\EE[\|\tilde{\bB}^k \|^2] + \epsilon.
\]
Dividing $K$ on both sides and using the fact 
\(g^{K-1}(\tilde{\bB}^{K-1}) \geq f(\bW^{K-1}+\bP^{K-1}\tilde{\bB}^{K-1}) \geq f^\star \), we can derive
\[
\frac{1}{K}\sum_{k=0}^{K-1}\frac{\mu}{2}\EE[\|\bB^k_\star\|^2] + \frac{1}{K}\sum_{k=0}^{K-2} \frac{1}{2\eta^k}\EE[\|\tilde{\bB}^k \|^2] \leq \frac{g^0(\vzero) - f^\star}{K}  + \epsilon.
\]
Next, we need to establish the connection between \(\|\bB^k_\star\|^2\) and the final stationary measure \(\|\nabla f(\bW^k)\|^2\). As $g^k(\bB)$ is \((1/\eta^k+ \hat{L})\)-smooth, \(\forall \bB\) it holds that
\[
\left(\hat{L}+ \frac{1}{\eta^k}\right)^2\| \bB - \bB^k_\star \|^2 \geq \| \nabla g^k(\bB)  - \nabla g^k(\bB^k_\star)\|^2 = \| \nabla g^k(\bB) \|^2.
\]
With \(\bB = \vzero\), it holds 
\begin{equation}\label{eq:bound-of-B}
 \frac{1}{K} \sum_{k=0}^{K-1} \frac{\mu}{2} \left(\hat{L}+ \frac{1}{\eta^k}\right)^{-2} \EE[\|  \nabla g^k(\vzero)\|^2] \leq \frac{f(\bW_0)- f^\star}{K}  + \epsilon.
\end{equation}
Furthermore, notice that 
\(\nabla g^k(\vzero) = (\bP^k)^\top \nabla f(\bW^{k}) \) and the random matrix \(\bP^k\) is sampled independently to \(\bW^k\), for any fixed \(\bW^k\), we have
\begin{align*}
    \EE_{\bP^k}[\| \nabla g^k(\vzero)\|^2] & = \sum\limits_\ell\EE_{P_\ell^k}[\Tr(\nabla_\ell f(\bW^k)^\top P_\ell^k (P_\ell^k)^\top \nabla_\ell f(\bW^k))] \\
    & = \sum\limits_\ell\EE_{P_\ell^k}[\Tr( P_\ell^k (P_\ell^k)^\top \nabla_\ell f(\bW^k)\nabla_\ell f(\bW^k)^\top )] \\
    & = \sum\limits_\ell\Tr(\EE_{P_\ell^k}[ P_\ell^k (P_\ell^k)^\top] \nabla_\ell f(\bW^k)\nabla_\ell f(\bW^k)^\top ) \\
    & = \sum\limits_\ell \| \nabla_\ell f(\bW^k)\|_F^2 = \| \nabla f(\bW^k)\|^2.
\end{align*}
Taking expectation with the respect to the randomness in \(\bW^k\), we can claim \(\EE[\| \nabla g^k(\vzero)\|^2 = \EE[\|\nabla f(\bW^k)\|^2]\). Inserting this result back to (\ref{eq:bound-of-B}) with \(\eta^k = \frac{1}{2\hat{L}}\), it can be written as 
\[
\frac{1}{K}\sum_{k=0}^{K-1}\EE[\| \nabla f(\bW^k)\|^2] \leq \frac{18\hat{L} (f(\bW^0) - f^\star)}{ K} +18\hat{L} \epsilon.
\]
Thus we complete the proof.
\end{proof}

\begin{table}[!htbp]
\centering
\begin{tabular}{l|c|c|c}
\toprule
Method  & Memory (GB)  &  Training Time  (h) &  Perplexity \\
\midrule
Adam  & 78.92  & 216 &15.43 \\
GaLore & 75.33  & 134 &15.59 \\
\textbf{RSO} & 54.81  & 64 & 15.99 \\
\bottomrule
\end{tabular}
\vspace{0.2cm}
\caption{\small Comparison of various pre-training methods for the LLaMA-7B model on the C4 dataset. Perplexity is reported at 50K steps. The training is conducted on \( 8\times \) A800 GPUs. The actual memory cost per device and the total training time are also reported. RSO and GaLore are configured with a batch size of 16, while Adam uses a batch size of 8.}
\label{tab:pretrain-llama-7b}
\end{table}

\section{More Experimental Results}
\subsection{Pre-training on LLaMA-7B Model}\label{app-pretrain-llama7b}
Table \ref{tab:pretrain-llama-7b} compares the performance of our RSO method with GaLore and Adam on the LLaMA-7B model, where evaluations are conducted for 50K steps due to limited computational resources. We also report the memory overhead and total training time for each method. As shown in the table, RSO exhibits performance comparable to that of GaLore and Adam. Notably, RSO requires less than half the training time of the other methods.

\subsection{Fine-tuning on LLaMA and OPT Models}\label{app-finetune-llamaopt}

Table \ref{tab:finetune-wincopa} compares RSO with other fine-tuning methods on LLaMA and OPT models across two datasets. As shown in the table, RSO outperforms other memory-efficient methods in terms of accuracy on most tasks.

\begin{table}[!htbp]
\centering
\begin{tabular}{l cccc cccc}
\toprule
\multirow{2}{*}{\textbf{Model}} & \multicolumn{4}{c}{\textbf{WinoGrande}} & \multicolumn{4}{c}{\textbf{Copa}} \\
\cmidrule(lr){2-5} \cmidrule(lr){6-9}
 & Adam & LoRA & GaLore & \textbf{RSO} & Adam & LoRA & GaLore & \textbf{RSO} \\
\midrule
LLaMA-7B    & 64.4 & 70.9 & 70.9 & \textbf{71.0} & 84.0    & 84.0    & 85.0    & \textbf{86.0}    \\
LLaMA-13B   & 73.3 & \textbf{76.6} & 74.6 & 74.7 & 90.0    & \textbf{92.0}    & \textbf{92.0}    & \textbf{92.0}    \\
\midrule
OPT-1.3B    & 60.4    & 57.3    & 58.3    & \textbf{58.9}    & 76.0 & 73.0 & 72.0 & \textbf{74.0} \\
OPT-6.7B    & 62.2    & 64.7    & 66.8    & \textbf{69.2}    & 78.0 & 80.0 & 80.0 & \textbf{82.0} \\
\bottomrule
\end{tabular}
\vspace{0.2cm}
\caption{\small Comparison of various methods for fine-tuning LLaMA and OPT models on the WinoGrande and COPA datasets. The test accuracy for each method is reported.}
\label{tab:finetune-wincopa}
\end{table}

\begin{table}[!htbp]
\centering
\begin{tabular}{l c c c c c c c}
\toprule
\textbf{Params} & \textbf{Hidden} & \textbf{Intermediate} & \textbf{Heads} & \textbf{Layers} & \textbf{Steps} & \textbf{Data amount} \\
\midrule
60M   & 512   & 1376  & 8  & 8  & 10K   & 1.3 B \\
130M  & 768   & 2048  & 12 & 12 & 20K   & 2.6 B \\
350M  & 1024  & 2736  & 16 & 24 & 60K   & 7.8 B \\
1 B   & 2048  & 5461  & 24 & 32 & 100K  & 13.1 B \\
7 B   & 4096  & 11008 & 32 & 32 & 150K  & 19.7 B \\
\bottomrule
\end{tabular}
\vspace{0.2cm}
\caption{\small Hyperparameter configurations for LLaMA models of different scales, along with the corresponding number of training steps. Due to limited computational resources, only the first 50K steps are completed for LLaMA-7B.}
\label{tab-pretrain-setup-llama}
\end{table}

\section{Experimental Details}

\subsection{Pre-training Experimental Setup}\label{app-setting-pretrain}
For the pre-training of the LLaMA models across all sizes, we employ a configuration aligned with \citep{zhao2024galore}. Specifically, the maximum sequence length is set to 256, and the total training batch size is set to 512, corresponding to 131K tokens per batch. A learning rate warm-up is applied during the first $10\%$ of the training steps, followed by a cosine annealing schedule to reduce the learning rate to $10\%$ of its initial value. The model configuration and training step details are summarized in Table \ref{tab-pretrain-setup-llama}. 

For the RSO method, we use a learning rate and subspace update interval that closely match those of GaLore. Similar to GaLore, a learning rate scaling factor is applied to the weights optimized by the RSO method, which is set to $0.35$.

\begin{table}[!htbp]
\centering
\begin{tabular}{lcccccccc}
\toprule
 & MNLI & SST-2 & MRPC & CoLA & QNLI & QQP & RTE & STS-B \\
\midrule
Batch Size & 16 & 16 & 16 & 32 & 16 & 16 & 16 & 16 \\
Epochs & 30 & 30 & 30 & 30 & 30 & 30 & 30 & 30 \\
Learning Rate & 1E-05 & 3E-05 & 3E-05 & 3E-05 & 1E-05 & 1E-05 & 1E-05 & 1E-05 \\
Rank & \multicolumn{8}{c}{4} \\
Scaling Factor & \multicolumn{8}{c}{\{8, 16, 32\}} \\
RSO interval & \multicolumn{8}{c}{\{300, 500\}} \\
Max Seq Length & \multicolumn{8}{c}{512} \\
\bottomrule
\toprule
 & MNLI & SST-2 & MRPC & CoLA & QNLI & QQP & RTE & STS-B \\
\midrule
Batch Size & 16 & 16 & 16 & 32 & 16 & 16 & 16 & 16 \\
 Epochs & 30 & 30 & 30 & 30 & 30 & 30 & 30 & 30 \\
Learning Rate & 1E-05 & 2E-05 & 2E-05 & 1E-05 & 1E-05 & 2E-05 & 2E-05 & 3E-05 \\
Rank & \multicolumn{8}{c}{8} \\
Scaling Factor & \multicolumn{8}{c}{\{8, 16, 32\}} \\
RSO interval & \multicolumn{8}{c}{\{300, 500\}} \\
Max Seq Length & \multicolumn{8}{c}{512} \\
\bottomrule
\end{tabular}
\vspace{0.2cm}
\caption{\small Hyperparameter settings for fine-tuning RoBERTa-Base model on the GLUE benchmark using the RSO method.}
\label{app-glue-hyper}
\end{table}

\begin{table}[!htpb]
\centering
\begin{tabular}{ccc}
\toprule
\textbf{Experiment} & \textbf{Hyperparameters} & \textbf{Values} \\
\midrule
\multirow{3}{*}{FT} & Batch size & 16 \\
    & Learning rate & \{1E-07, 1E-06, 1E-05\} \\
    & Weight Decay & 0 \\
\midrule
\multirow{4}{*}{LoRA} & Batch size & 16 \\
    & Learning rate & \{1E-07, 1E-06, 5E-06\} \\
    & Rank & 8 \\
    & Weight Decay & 0 \\
\midrule
\multirow{5}{*}{GaLore} & Batch size & 16 \\
    & Learning rate & \{1E-07, 1E-06, 5E-06\} \\
    & Rank & 8 \\
    & Update interval & \{300, 500\} \\
    & $\alpha$ & \{4, 8\} \\
    & Weight Decay & 0 \\
\midrule
\multirow{6}{*}{RSO} & Batch size & 16 \\
    & Learning rate & \{1E-07, 1E-06, 5E-06\} \\
    & Rank & 8 \\
    & Update interval & \{300, 500\} \\
    & Scaling Factor & \{4, 8\} \\
    & Weight Decay & 0 \\
\bottomrule
\end{tabular}
\vspace{0.2cm}
\caption{\small Hyperparameter settings for fine-tuning LLaMA and OPT models on the WinoGrande and COPA datasets.}
\label{ft-copa-wg}
\end{table}

\subsection{Fine-tuning Experimental Setup}\label{app-finetuning}

\textbf{Details of fine-tuning on GLUE benchmark.} For fine-tuning the pre-trained RoBERTa-Base model on the GLUE benchmark, the model is trained for 30 epochs with a batch size of 16 for all tasks, except for CoLA, which uses a batch size of 32. Similar to GaLore, a scaling factor is applied to the learning rate for the weights which applies the RSO method. The detailed hyperparameter settings are provided in Table \ref{app-glue-hyper}.

\textbf{Details of fine-tuning on WinoGrande and Copa datasets.}
For fine-tuning LLaMA and OPT models on the WinoGrande and Copa datasets, we randomly sample 1,000 examples from each dataset for training, 500 examples for validation, and 1,000 examples for testing. All experiments are conducted over 1,000 steps. The detailed hyperparameter settings are provided in Table \ref{ft-copa-wg}.

\end{document}